\DeclareMathOperator{\argmax}{argmax}
\DeclareMathOperator{\argmin}{argmin}
\newlist{questions}{enumerate}{1}
\setlist[questions,1]{label=Q\arabic*, ref=Q\arabic*, leftmargin=*}
\crefname{questionsi}{Question}{Questions}
\newlist{contributions}{enumerate}{1}
\setlist[contributions,1]{label=C\arabic*, ref=C\arabic*, leftmargin=*}
\crefname{contributionsi}{Contribution}{Contributions}
\begin{document}

\title{Density Operator Expectation Maximization}

\author{\name Adit Vishnu \email aditvishnu@iisc.ac.in \\
        \name Abhay Shastry \email abhayshastry@fsid-iisc.in\\
        \name Dhruva Kashyap \email kdhruva@iisc.ac.in\\
        \name Chiranjib Bhattacharyya \email chiru@iisc.ac.in\\
       \addr Department of Computer Science and Automation\\
       Indian Institute of Science\\
       Bangalore, KA 560012, India}

\editor{My editor}

\def\t{^\intercal}
\def\h{^\dagger}
\newcommand{\iter}[1]{^{(#1)}}

\def\vis{{\scriptscriptstyle\mathrm{V}}}
\def\hid{{\scriptscriptstyle\mathrm{L}}}

\def\data{\mathcal{D}}
\def\cL{\mathcal{L}}
\def\eq{\!=\!}
\def\DBM{\ensuremath{\tt DBM}\xspace}
\def\doms{\textbf{DOM}s\xspace}
\def\dolvm{{DO-LVM}\xspace}
\def\dolvms{{DO-LVMs}\xspace}
\def\qlvms{{CQ-LVMs}\xspace}
\def\qlvm{{CQ-LVM}\xspace}

\def\qbm{\ensuremath{\tt QBM}\xspace}

\def\R{\mathbb{R}}
\def\hil{\mathcal{H}}
\def\tr{\mathrm{Tr}}
\def\map{\mathcal T}
\def\den{\mathcal{P}}
\def\L{\mathcal{L}}
\def\O{\mathcal{O}}
\def\B{\mathcal{B}}
\def\N{\mathcal{N}}
\def\S{\mathcal{S}}
\def\H{\mathrm{H}}
\def\id{\mathrm{I}}
\def\du{\mathrm{D_U}}
\def\svn{\mathrm{S_{VN}}}
\def\Lu{\L_\mathrm{U}}
\def\Lp{\L_\mathrm{P}}
\def\ker{\mathrm{ker}}

\def\rhoh{\rho_{\hid}}
\def\etah{\widehat{\eta}}
\def\rhov{\rho_{\vis}}
\def\rhovh{\rho_{\vis\hid}}
\def\rhot{\widetilde{\rho}}
\def\etav{\eta_{\vis}}
\def\etat{\widetilde{\eta}}
\def\etah{\eta_{\hid}}
\def\sigv{\widetilde{\sigma}}
\def\sigh{\sigma_{\hid}}
\def\etae{\eta_{\epsilon}}
\def\etate{\tilde{\eta}_{\epsilon}}

\def\sz{\sigma\iter{z}}
\def\sx{\sigma\iter{x}}

\def\v{\mathbf v}
\def\u{\mathbf u}
\def\e{\mathbf e}

\def\QELBO{\mathrm{QELBO}}
\def\doem{{DO-EM}\xspace}
\def\MRE{{\tt MRE}\xspace}
\def\PRM{{\tt PRM}\xspace}
\def\CAO{{\tt CAO}\xspace}
\def\conds{{\nameref{defn:conds}}\xspace}
\def\qidbm{\ensuremath{\tt QiDBM}\xspace}
\def\qrbm{\ensuremath{\tt QRBM}\xspace}
\def\qbms{\ensuremath{\tt QBMs}\xspace}

\def\class{{CQ}\xspace}
\def\RBM{\ensuremath{\tt RBM}\xspace}

\newcommand{\kl}[2]{\mathrm{D_{KL}}\left(#1,#2\right)}

\def\x{\mathrm x}
\def\X{\mathrm X}
\def\z{\mathrm z}
\def\Z{\mathrm Z}

\maketitle
\begin{abstract}
Machine learning with density operators, the mathematical foundation of quantum mechanics, is gaining prominence with rapid advances in quantum computing. Generative models based on density operators cannot yet handle tasks that are routinely handled by probabilistic models. The progress of latent variable models, a broad and influential class of probabilistic unsupervised models, was driven by the Expectation-Maximization framework. Deriving such a framework for density operators is challenging due to the non-commutativity of operators. To tackle this challenge, an inequality arising from the monotonicity of relative entropy is demonstrated to serve as an evidence lower bound for density operators. A minorant-maximization perspective on this bound leads to Density Operator Expectation Maximization (DO-EM), a general framework for training latent variable models defined through density operators. Through an information-geometric argument, the Expectation step in DO-EM is shown to be the Petz recovery map. The DO-EM algorithm is applied to Quantum Restricted Boltzmann Machines, adapting Contrastive Divergence to approximate the Maximization step gradient. Quantum interleaved Deep Boltzmann Machines and Quantum Gaussian-Bernoulli Restricted Boltzmann Machines, new models introduced in this work, outperform their probabilistic counterparts on generative tasks when trained with similar computational resources and identical hyperparameters.
\end{abstract}

\begin{keywords}
  density operators, latent variable models, expectation-maximization algorithm, quantum information theory, Quantum Boltzmann Machines
\end{keywords}

\section{Introduction}\label{sec:intro}
Recent advances in quantum hardware and hybrid classical-quantum algorithms have sparked growing interest in machine learning models capable of operating within quantum regimes \citep{preskill2018}. Quantum machine learning promises to harness quantum mechanics to enhance learning, either by achieving computational speedups on quantum devices \citep{kerenidis2019,kerenidis2020,miyahara2020,kim2023} or by modeling richer data structures through quantum representations \citep{schuld2015,biamonte2017,amin2018}. The latter approach employs density operators---positive semi-definite, unit-trace operators on Hilbert spaces---that generalize the notion of probability distributions and underpin the mathematical foundations of quantum mechanics \citep{vonneumann1955,gleason1957}. While significant progress has been made in the use of density operators in supervised learning, there is relatively less progress in the unsupervised setting \citep{gujju2024}.

Latent variable models (LVMs) are a cornerstone of unsupervised learning, offering a principled approach to modeling complex data distributions through the introduction of unobserved or hidden variables \citep{bishop2006}. These models facilitate the discovery of underlying structure in data and serve as the foundation for a wide range of tasks, including generative modeling, clustering, and dimensionality reduction \citep{jordan1999,ghahramani2001}. Classical examples such as Factor Analysis \citep{basilevsky2009statistical}, Independent Component Analysis \citep{ICA}, Hidden Markov Models \citep{baum1967}, and Boltzmann Machines \citep{ackley1985} exemplify the power of latent variable frameworks in capturing dependencies and variability in observed data. In recent years, LVMs have formed the conceptual backbone of deep generative models including Variational Autoencoders \citep{kingma2014} and Diffusion-based models \citep{ho2020}. The evidence lower bound (ELBO) and the Expectation-Maximization (EM) algorithm \citep{baum1967,dempster1977} has been instrumental in deriving procedures for learning latent variables models. These techniques remain the standard approach for learning latent variable models as they offer a principled and tractable means of handling incomplete data.

Density operators were first studied as latent variable models by \citet{warmuth2005}. There is renewed interest in density operator latent variable models (\dolvm) due to their relevance in quantum computing, yet development remains in its early stages, with fundamental questions concerning expressivity, inference, and learning still largely unresolved \citep{amin2018,kieferova2017,kappen2020,coopmans2024}. 
Most existing learning algorithms for \dolvms fail to scale beyond 12-bit binary strings, restricting evaluation to toy datasets and preventing any meaningful assessment of their modeling power on real-world data. EM-based algorithms can provide a simpler alternative to existing learning algorithms for \dolvms which directly maximizes the likelihood.  However, deriving analogous algorithms in the density-operator setting is challenging for several reasons, primarily because operator-theoretic inequalities---such as the operator Jensen inequality---cannot be directly applied to obtain an ELBO-style bound for \dolvms. A precise characterization of models compatible with such bounds, along with their computational behavior, remains an important direction for investigation.

\subsection{Latent Variable Models}
Latent variable models (LVMs) \citep{bishop2006} model the joint probability distribution of a random variable $\X=(\X_1,\dots,\X_{d_\vis})$ with parameters \(\theta\) as 
\[
    \Pr(\X\mathord{=}\,{\x}\mid\theta) = \sum_{\z} \Pr(X \mathord{=}\,{\mathrm x} , \Z \mathord{=}\,{\mathrm z}\mid\theta)
\]
where $\Z=(\Z_1,\dots,\Z_{d_\hid})$ are unobserved \emph{hidden variables}. Maximum likelihood-based methods estimate the parameters of an LVM by maximizing the log-likelihood 
\begin{equation*}
    \L(\data,\theta) = \frac{1}{N}\sum_{i=1}^N \ell_i( \theta)\ \text{where}\ \ell_i(\theta) = \log \Pr(\X\mathord{=}\, {\mathrm x}^{(i)}\mid \theta)
\end{equation*}
from a data set $\data=\{\x\iter{1},\dots,\x\iter{N}\}$.
However, this maximization problem is intractable in most cases and gradient-based algorithms are difficult to implement because of unwieldy computations in $\ell_i(\theta)$ due to the marginalization within the logarithm.

Boltzmann Machines (BM) are a class of LVMs inspired by the Ising model in statistical physics \citep{ackley1985}. BM variants such as Restricted Boltzmann Machines \citep[RBM,][]{smolensky1986}, Gaussian-Bernoulli RBMs \citep[GRBM,][]{grbm_harmonium}, and Deep Boltzmann Machines \citep[DBM,][]{salakhutdinov2009} could be trained on image data sets, learning meaningful latent representations and generating coherent samples. RBMs and GRBMs were successfully applied to natural images, digits, and simple textures, while DBMs extended this capability to deeper hierarchical representations---an early precursor to modern generative models.
\subsection{EM algorithm}
We recall the EM algorithm used to train probabilistic latent variable models \citep{bishop2006}.The evidence lower bound (ELBO),
\begin{align}\label{ELBO}\tag{ELBO}
    \ell_i(\theta)
    \geq \sum_{\z} q_i(\z)\log \frac{p_\theta(\x^{(i)},\z)}{q_i(\z)},
\end{align}
is a well known lower bound on the log-likelihood for any choice of a \emph{variational distribution} $q_i(\z)$, with equality when $q_i(\z) = p(\z \mid \x^{(i)})$. The Expectation-Maximization algorithm \citep{baum1967,dempster1977} arises by choosing 
$q_i(\z) = p_{\theta^{(\mathrm{old})}}(\z \mid \x^{(i)})$, 
which maximizes the ELBO with respect to the variational distribution $q_i$ (\ref{eq:Estep}), followed by maximizing it with respect to model parameters $\theta$ (\ref{eq:Mstep}):
\begin{align*}
Q_i(\theta \mid \theta^{(\mathrm{old})})
&= \sum_{\z} p_{\theta^{(\mathrm{old})}}(\z \mid \x^{(i)})
\log \left( \frac{p_\theta(\x^{(i)},\z)}{p_{\theta^{(\mathrm{old})}}(\z \mid \x^{(i)})} \right),\tag{E step}\label{eq:Estep} \\
\theta^{(\mathrm{new})} 
&= \arg\max_{\theta} \frac{1}{N}\sum_{i=1}^N Q_i(\theta \mid \theta^{(\mathrm{old})}). \tag{M step}\label{eq:Mstep}
\end{align*}
Here, $Q_i$ is a \emph{minorant} of the log-likelihood \citep{deleeuw1994,lange2000}, satisfying
\begin{align}\label{eq:guarantee}
\ell_i(\theta) \ge Q_i(\theta \mid \theta^{(k)})\ \text{for all } \theta\ \text{and}\ 
\ell_i(\theta^{(\mathrm{old})}) = Q_i(\theta^{(\mathrm{old})} \mid \theta^{(\mathrm{old})}),
\end{align}
which ensures 
$\mathcal L(\mathcal D, \theta^{(\mathrm{new})}) 
\ge \mathcal L(\mathcal D, \theta^{(\mathrm{old})})$. 

The EM algorithm applied to different latent variable models has led to several unsupervised learning algorithms such as k-means clustering and the Baum-Welch algorithm for Hidden Markov Models \citep{bishop2006}. Its formulation---optimizing a surrogate lower bound when direct likelihood maximization is intractable---motivated variational inference, where complex problems are turned into simpler ones by decoupling the degrees of freedom and introducing additional variational parameters that can be optimized to approximate otherwise intractable distributions \citep{jordan1999}. The goal of this paper is to derive an EM algorithm for density operators.
\subsection{Problem Statement and Contributions}
 Quantum Boltzmann Machines (\qbm) are density operator extensions of BMs based on the transverse field Ising model \citep{amin2018}. Unlike their probabilistic counterparts, \qbms, have yet to demonstrate comparable success in generative modeling. Their formulation remains largely theoretical and on toy data sets. Practical training methods for \qbms on complex data remain an open challenge. To bridge this gap, we ask the following questions:
\begin{enumerate}
    \item \textbf{Operator theoretic challenges:} The non-commutativity of operators restricts the use of techniques from probabilistic latent variable models when deriving training algorithms for \dolvms. 
    Is it possible to derive computationally efficient iterative schemes in the spirit of EM for \dolvms with guarantees similar to \cref{eq:guarantee}?
    \item \textbf{Classical data sets:} Since traditional machine learning datasets are not quantum mechanical in nature, can \dolvms be adapted to learn from such data? Specifically, could classical-quantum states used in quantum error correction, where classical information guides the recovery of quantum states, provide a framework for addressing such machine learning tasks?
    \item \textbf{Scaling to standard ML data sets:} Current methods cannot train \dolvms on standard image data sets such as MNIST due to algorithmic limitations. Are there variants of \dolvms that can handle such data? Specifically, can varaints of the classical Boltzmann Machine, such as GRBMs and DBMs, be extended to \qbms and trained on image data sets? If so, is there a modeling advantage to using density operator models over classical probabilistic models for learning such data?
\end{enumerate}

In this paper, we answer these questions by making the following contributions.
\begin{enumerate}
    \item \textbf{EM algorithm for Density Operators}: The Density Operator Expectation Maximization (\doem) algorithm is derived in \cref{sec:doem}. Using the information geometric interpretation of the EM algorithm, the E step is shown to be the well known Petz Recovery Map. Furthermore, \cref{thm:ascent} guarantees log-likelihood ascent across iterations of \doem under mild assumptions, while still accommodating a rich class of models. The two steps of the EM algorithm are computationally more efficient than existing techniques for computing the gradient of the log-likelihood in \dolvms. 
    \item \textbf{Classical-Quantum Models:} The \doem algorithm is specialized to standard ML datasets in \cref{sec:classic}. We introduce a new class of \dolvms, termed Classical-Quantum LVMs (\qlvm), which feature classical visible states and quantum latent states. \qlvms preserve likelihood ascent under the \doem algorithm and eliminate the dependence of \doem on the size of the visible sample space. Quantum counterparts of DBMs and GRBMs are provided as concrete examples of \qlvms.
    \item \textbf{Generative capability:} The first empirical evidence of a modeling advantage for \dolvms trained on classical hardware using image datasets is presented in \cref{sec:exp}. Under comparable computational budgets and using identical hyperparameters—without any additional tuning for the quantum variants—quantum DBMs trained on MNIST achieve a 40-60\% reduction in Fréchet Inception Distance compared to Deep Boltzmann Machines that approximate the log-likelihood gradient using contrastive divergence. Similarly, quantum GRBMs trained on Fashion-MNIST and CelebA-32 achieve a 20-30\% reduction in Fréchet Inception Distance relative to Gaussian-Bernoulli Restricted Boltzmann Machines trained via contrastive divergence.
\end{enumerate}
\subsection{Organization}
\cref{sec:lvm} provides the necessary background on training latent variable models. \cref{sec:qi} introduces key properties of density operators along with useful trace inequalities. \cref{sec:qbm} reviews prior efforts to train \qbms and outlines the limitations of existing approaches. \cref{sec:dolvm} develops a formal treatment of \dolvms and establishes the quantum evidence lower bound. \cref{sec:doem} presents the \doem algorithm and proves its central properties. \cref{sec:classic} then specializes \dolvms and \cref{alg:doem} to classical data sets and introduces new models consistent with this specialization. Finally, \cref{sec:exp} provides empirical validation for the claims developed in the preceding sections.

\section{Training Latent Variable Models}\label{sec:lvm}
This section provides the relevant background on training probabilistic LVMs. 

\subsection{EM through the lens of the Data Processing Inequality}
The Expectation-Maximization (EM) algorithm \citep{baum1967, dempster1977}, a two-step iterative method for training latent-variable models, has long been the workhorse of unsupervised learning. In this section, we derive the EM algorithm using the data processing inequality to prepare for the subsequent quantum formulation.

Let \(q(\X)\) be the density of the data distribution to be learned by the LVM \(p_\theta(\X,\Z)\). 
Let \(q(\X,\Z)\) be any joint distribution that marginalizes to $q(\X)=\sum_{\z}q(\X,\Z)$. From the data processing inequality \citep{shannon1948} applied to marginalization, we know
\begin{equation}\tag{DPI}\label{eq:DPI}
\kl{q(\X,\Z)}{p_\theta(\X,\Z)}\geq \kl{q(\X)}{p_\theta(\X)}.
\end{equation}
From the chain rule of KL Divergence \citep{cover2006}, 
\begin{equation*}
\kl{q(\X,\Z)}{p_\theta(\X,\Z)} = \kl{q(\X)}{p_\theta(\X)} + \sum_{\mathrm x} q(\X\eq\x) \kl{q(\Z|\X\eq\x)}{p_\theta(\Z|\X\eq\x)}
\end{equation*}
we know that the data processing inequality is saturated when \(p_\theta(\Z|\x) = q(\Z|\X\eq\x)\) for all $\x$, that is, the model and data posteriors are the same. 

Expanding the definition of KL divergence in the data processing inequality results in
\begin{equation*}
\sum_\x q(\X\eq\x)\log{p_\theta(\X\eq\x)}\geq -\kl{q(\X\eq\x,\Z\eq\z)}{p_\theta(\X\eq\x,\Z\eq\z)} - \mathrm{H_S}(q(\Z))
\end{equation*}
where \(\mathrm{H_S}(q(\Z)) = -\sum_\x q(\X\eq\x) \log{q(\X\eq\x)}\) is the Shannon entropy of the data distribution. When the data distribution is an empirical distribution over a finite set of points, \(q(\X\eq\x) = \nicefrac{1}{N}\sum_{i=1}^N \delta(\x\iter{i})\) where \(\delta(\x\iter{i})\) is the Dirac measure at point \(\x\iter{i}\), we may simplify the chain rule as,
\[
\frac{1}{N}\sum_{i=1}^N \log{p_\theta(\X\eq\x\iter{i})} \geq \frac{1}{N}\sum_{i=1}^N \sum_{\z}q(\Z\eq\z|\X\eq\x\iter{i}) \log\frac{p_\theta(\X\eq\x\iter{i},\Z\eq\z)}{q(\Z\eq\z|\X\eq\x\iter{i})}.
\]
We observe that this inequality is the averaged \ref{ELBO} for all data points. Alternatively maximizing the bound with respect to $q$ and $\theta$ results in the EM algorithm described in \cref{sec:intro}.

\subsubsection{E Step as Information Projection}
The Expectation step of the EM algorithm may be seen as a solution to an information projection problem \citep{amari1995,hino2024}. 

\begin{definition}[Information Projection]
    The information projection of a probability distribution $p$ onto a set of probability distributions $\mathcal S$ is 
    \begin{align*}
        r^*=\underset{r\in \mathcal S}{\argmin}\,\kl{r}{p}.
    \end{align*}
\end{definition}
The E step of the EM algorithm is an information projection of the model $p_{\theta^{(\textrm{old})}}(\X,\Z)$  onto the set of joint distributions that marginalize to the empirical data distribution $q(\X)$. Formally, the solution to the information projection problem 
    \begin{align*}
    q^*(\X,\Z)=\underset{\sum_{\z}r(\X,\z)=q(\X)}{\argmin}\kl{r(\X,\Z)}{p_{\theta}(\X,\Z)},
    \end{align*}
is given by $q^*(\X,\Z)=q(\X)p_{\theta}(\Z|\X)$ \citep{cover2006}. It is easy to see that $q*$ saturates the \ref{eq:DPI} and gives us the \cref{eq:Estep}.

\subsection{Boltzmann Machines}

This section discusses variants of Boltzmann Machines that played a key role in the development of modern deep learning \citep{rbmclassification,rbmcollab,rbmdim}, and the contrastive divergence algorithm for learning them.

\subsubsection{Boltzmann Machines and Variants} Boltzmann Machines (BM) are stochastic neural networks that define a probability distribution over binary vectors based on the Ising model in statistical physics \citep{ackley1985}. A Boltzmann Machine (BM) consists of a visible layer $\mathrm{x} \in \{0,1\}^m$ and a hidden layer $\mathrm{z} \in \{0,1\}^n$.
It is parameterized by
\[
\theta = \{\mathbf{W}, \mathbf{W}^{(\vis)}, \mathbf{W}^{(\hid)}, \mathbf{a}, \mathbf{b}\},
\]
where $\mathbf{W} \in \mathbb{R}^{m \times n}$ denotes the interaction weights between visible and hidden layers,
$\mathbf{W}^{(\vis)} \in \mathbb{R}^{m \times m}$ and $\mathbf{W}^{(\hid)} \in \mathbb{R}^{n \times n}$ are symmetric, hollow within-layer weight matrices,
$\mathbf{a} \in \mathbb{R}^m$ is the visible bias vector, and
$\mathbf{b} \in \mathbb{R}^n$ is the hidden bias vector. The joint distribution is defined through an energy function $E_\theta$ as
\begin{align*}
     E_\theta(\mathrm{x},\mathrm{z})
     = -\mathrm{x}^\top \mathbf{W} \mathrm{z}
      - \frac{1}{2}\mathrm{x}^\top \mathbf{W}^{(\vis)} \mathrm{x}
      - \frac{1}{2}\mathrm{z}^\top \mathbf{W}^{(\hid)} \mathrm{z}
      - \mathbf{a}^\top \mathrm{x}
      - \mathbf{b}^\top \mathrm{z},\\
      \Pr(\mathrm{x},\mathrm{z}\mid\theta)
    = \frac{1}{\mathcal Z(\theta)} \exp\!\left( E_\theta(\mathrm{x},\mathrm{z}) \right)\ \text{and}\ 
    \quad\mathcal Z(\theta)
    = \sum_{\mathrm{x}',\mathrm{z}'} 
      \exp\!\left( E_\theta(\mathrm{x}',\mathrm{z}') \right).
\end{align*}

\emph{Restricted Boltzmann Machines} (RBM) were introduced with no intra-layer connections due to the intractability of learning in fully connected BMs \citep{smolensky1986,hinton2002,carreira-perpinan2005}. The energy function of an RBM  simplifies to
\begin{align*}\tag{{\tt RBM}}\label{eq:RBM}
    E_\theta(\mathrm{x},\mathrm{z})
    = -\mathrm{x}^\top \mathbf{W} \mathrm{z}
      - \mathbf{a}^\top \mathrm{x}
      - \mathbf{b}^\top \mathrm{z},
\end{align*}
enabling efficient Gibbs sampling using the Contrastive Divergence algorithm discussed in the next section. 

\emph{Gaussian-Bernoulli RBMs} (GRBM) extend RBMs to model continuous visible units \(\x\in\mathbb{R}^m\) and discrete hidden units \(\z\in\{0,1\}^n\) \citep{grbm_harmonium,liao2023gaussianbernoulli}. It is additionally parameterized by a variance term \(\mathbf{s}\in \mathbb{R}^m_{>0}\). The energy and partition functions are given by
\begin{equation*}\tag{{\tt GRBM}}\label{eq:GRBM}
\begin{aligned}
    E_{\theta}(\x,\z) &= -\frac{1}{2}\sum_{i=1}^m\frac{(\x_i-\mathbf a_i)^2}{s_i} - \sum_{i=1}^m\sum_{j=1}^n\mathbf W_{ij}\frac{\x_i}{s_i}\z_j-\sum_{j=1}^n\mathbf{b}_j\z_j,\\
    \mathcal Z(\theta) &= \int_{-\infty}^{\infty} \sum_{\z}\exp(E_\theta(\x,\z))\mathrm d\x.
\end{aligned}
\end{equation*}

\emph{Deep Boltzmann Machines} (DBM) stack RBMs using undirected connections and allow for joint training of all layers \citep{salakhutdinov2009}. The energy function of a DBM with \(L\) layers, \(\Pr(\x, \z_{[1]}, \dots, \z_{[L]})\) is defined as  
\begin{equation}
    E_{\theta}(\x, \z_{[1]}, \dots, \z_{[L]})=- \mathbf{a}^\top \mathrm{x} -\sum_{i=1}^L \mathbf{b}_i^{\top} \z_{[i]} - \mathrm{x}^\top \mathbf{W}^{(1)}\z_{[1]} - \sum_{i=1}^{L-1}\mathrm{z}_{[i]}^\top \mathbf{W}^{(i+1)}\z_{[i+1]}.\tag{{\tt DBM}}\label{eq:dbm}
\end{equation}
An RBM can be seen as a DBM with a single hidden layer.

\subsubsection{Contrastive Divergence} 
The Contrastive Divergence (CD) algorithm provides a computationally tractable approximation to maximum-likelihood learning in Boltzmann Machines by replacing the intractable model expectation with a short-run Markov chain \citep{hinton2002,carreira-perpinan2005}. The resulting estimator is biased but substantially more efficient, making it suitable for large-scale models where exact sampling would be prohibitively expensive.

The gradient of the log-likelihood of an \ref{eq:RBM} for the interaction terms $\mathbf W$ is 
\begin{align*}
    \frac{\partial}{\partial \mathbf W}\mathcal L(\data,\theta)=\mathbb E_{q(\X)p_{\theta}(\Z|\X)}(\x \z^{\top})-\mathbb E_{p_{\theta}(\X,\Z)}(\x \z^\top)
\end{align*}
where $q(\X)$ is the data distribution. The second term in this expression is computationally intractable. Since an \ref{eq:RBM} has no intra-layer interactions, all units within a layer are conditionally independent given the state of the opposite layer
\begin{align*}
    p_\theta(\x|\z)=\prod_{i=1}^m p_\theta(\x_i|\z)\quad\text{and}\quad
    p_\theta(\z|\x)=\prod_{i=1}^n p_\theta(\z_i|\x).
\end{align*}
For a data point $\x$, the CD-$k$ algorithm approximates the expectation values in the gradients by the sampling scheme
\begin{align*}
    \z_j(t) \sim p_\theta(\z_j = 1 \mid \x(t)) &= \sigma\left( \sum_i \mathbf W_{ij} \x(t)_i + \mathbf b_j\right),\\
    \x(t+1)_{i} \sim p_\theta(\x_i = 1 \mid \z(t)) &= \sigma\left(\sum_j \mathbf W_{ij} \z(t)_j + \mathbf a_i\right),\\
    \mathbb E_{p_{\theta}(\Z|\X=\x)}(\x \z^\top) &\approx \x\z(0)^\top,\\
    \mathbb E_{p_{\theta}(\X,\Z)}(\x \z^\top)&\approx\x(k)\z(k)^\top,
\end{align*}
with $\x(0)=\x$. The gradients for the bias terms are calculated similarly.

The CD approach can be extended to variants of RBMs. In DBMs, as all units within a layer are conditionally independent given the state of the adjacent layers which leads to a sampling scheme similar to RBMs \citep{salakhutdinov2009,salakhutdinov2012}. In a GRBM, the conditional probabilities for the layers are given by
\begin{align*}
    \z_j(t) \sim p_\theta(\Z_j = 1 \mid \x(t)) &= \sigma\left( - \mathbf b_j-\sum_i \mathbf W_{ij} \frac{\x(t)_i}{s_i} \right),\\
    \x(t+1)_{i} \sim p_\theta(\X_i \mid \z(t)) &= \mathrm{Normal}\left(\mu_i+\sum_j \mathbf W_{ij} \z(t)_j ,s_i\right).
\end{align*}
These contrastive divergence techniques made it possible to iteratively train variants of the Boltzmann Machines at scale, paving the way for their use as early generative models.
\section{Quantum Information}\label{sec:qi}
Density operators were introduced by \citet{vonneumann1955} as the mathematical foundation of quantum mechanics. The formalism built from them is therefore described as \emph{quantum}, in contrast to the \emph{classical} theory rooted in probability. While this paper does not engage in quantum mechanics directly, the ideas and results carry over to contexts such as quantum computing. We use the adjective quantum to describe the non-probabilistic nature of density operator. This section introduces the foundations of quantum information that are useful to work with the models in this paper.

\subsection{Notation}
This paper works in finite-dimensional Hilbert spaces unless stated otherwise, for the sake of clarity. All results admit natural extensions to the infinite-dimensional setting. A short introduction to matrix analysis is provided in Appendix \ref{app:analysis}.

The absolute value of a complex number $\alpha$ is denoted by $\left|\alpha\right|$. $\hil$ denotes a Hilbert space over $\mathbb{C}$ with dimension $d_{\hil}$. The conjugate transpose of a column vector $\v$ in $\hil$ is given by $\v\h$ and its rank-one projectors are written $\Lambda(\v) = \v\v\h$. 
Operators on $\mathcal{H}$, linear maps that takes vectors to vectors, form the set $\map(\mathcal{H})$. The set of density operators on $\mathcal{H}$ is $\den(\hil)$. The identity operator is denoted $\id$ with the relevant dimension or Hilbert space in the subscript. The support of $A$ is denoted $\mathrm{supp}(A)$ and the null space of $A$ is denoted $\ker(A)$. For operators $A$ and $B$, the Kronecker product is $A \otimes B$ and the direct sum is $A \oplus B$. The Hilbert-Schmidt inner product is $\langle X, Y \rangle = \operatorname{Tr}(X^\dagger Y)$. The commutator of $X$ and $Y$ is denoted $[X,Y]=XY-YX$.

\subsection{Density Operators}
A quantum state is described by a density operator \citep{nielsen2010}.
\begin{definition}[Density Operator]
    Density operators on a Hilbert space $\hil$ is the set $\den(\hil)$ of Hermitian, positive semi-definite operators with unit trace. 
\end{definition}
Normalized elements $\mathbf u$ of the Hilbert space $\hil$ are \textit{pure states} with density operator $\rho=\Lambda(\u)$ while other states are \textit{mixed}. A density operator can be interpreted as a probability distribution over pure states since the spectral theorem stipulates that a density operator $\rho\in\den(\hil)$ can be expressed as a mixture 
\begin{align*}
    \rho= \sum_{i=1}^{d_\hil}\alpha_i \Lambda(\u_i)\quad\text{with}\quad \sum_{i=1}^{d_\hil}\alpha_i=1,\ \alpha_i\geq 0\ \text{and }\u_i\h\u_i = 1\ \text{for all}\ i
\end{align*}
 where $(\alpha_i,\u_i)$ are the eigenvalues and eigenvectors of $\rho$. A density operator over a Hilbert space with dimension $d_\hil$ generalizes the notion of a probability distribution over a sample space with cardinality $d_\hil$. The standard sample space of such a probability space correspond to the standard basis $\{\e_i\}_{i=1}^{d_\hil}$ of the Hilbert space $\hil$. Hence, classical probability distributions are diagonal density operators with $\u_i = \e_i$. It is often useful to assume additional properties for a density operator. 
\begin{definition}[Faithful Density Operator]
    A density operator $\rho$ on a finite-dimensional Hilbert space $\hil$ is \emph{faithful} if the following equivalent conditions hold:
    \begin{multicols}{2}
    \begin{itemize}
        \item $\rho$ is positive definite,
        \item $\rho$ is invertible,
        \item $\mathrm{supp}(\rho)=\hil$,
        \item $\ker(\rho)=0$.
    \end{itemize}
    \end{multicols}
\end{definition}
This manuscript uses the operator theoretic term \emph{faithful} to reflect the conditions required by our theorems in the infinite-dimensional setting \citep{bratteli1987,bratteli1997}.

Density operators on a composite Hilbert space $\hil_A \otimes \hil_B$ play the role of joint probability distributions. A well-studied subclass is that of classical-quantum states, which have been extensively analyzed in the quantum information literature \citep{holevo1998,wilde2016}.
 \begin{definition}[Classical-Quantum State]
     A density operator $\rho$ in $\den(\hil_A\otimes\hil_B)$ is called a \emph{classical-quantum state} if there is a predetermined orthonormal basis $\{\u_i\}_{i=1}^{d_A}$  such that 
     \begin{align*}
         \rho=\sum_{i=1}^{d_A}\alpha_i \Lambda(\u_i) \otimes \rho_{\scriptscriptstyle B}(i)
     \end{align*}
     where $\rho_{\scriptscriptstyle B}(i)$ are density operators in $\den(\hil_B)$ and $\alpha_i\geq 0$ with $\sum_{i=1}^{d_A}\alpha_i=1$.
 \end{definition}
These models are called classical-quantum as the eigenvectors of $\rho$ over the Hilbert space $\hil_A$ is fixed. The component of $\rho$ corresponding to the subsystem $\hil_A$ is fully determined by the classical probability distribution $\{\alpha_i\}_{i=1}^{d_A}$. We use classical-quantum states to formulate classical-quantum latent variable models, a family of \dolvms that can be trained on standard ML data sets on classical hardware, as detailed in \cref{sec:classic}.

\subsubsection{Observables and Measurement}
  Hermitian operators on $\hil$ are the quantum mechanical analogs of random variables. The expected value of an observable $\O$ in $\map(\hil)$ with respect to a density operator $\rho$ is given by the Hilbert-Schmidt inner product $\braket{\mathcal{O}}_\rho=\tr(\rho\,\O)$. This quantity reduces to the classical expectation value if $\rho$ and $\O$ are diagonal as the quantum correlations encoded in the off-diagonal elements does not make an appearance in the expression for the trace. A sample is drawn from $\mathcal O$ using a \textit{measurement}.
  \begin{definition}[Projective Measurement]\label{def:proj}
      A \emph{projective measurement} is described by a Hermitian observable $\O$ in $\map(\hil)$. If the observable has spectral decomposition $\O=\sum_{i=1}^{d_\hil}\lambda_i \Lambda_i$ where $\Lambda_i$ is the projector onto the eigenspace of $\O$ with eigenvalue $\lambda_i$, the measurement results in outcome $\lambda_i$ with probability $\Pr(\lambda_i)=\tr(\rho \Lambda_i)$.
  \end{definition}
  It is possible to identify the projection operator that resulted in the outcome if the observable $\O$ has non-degenerate eigenvalues. Multiple measurements on $\rho$ is physically equivalent to measuring independent copies of same system and result in independent and identically distributed samples. Drawing a sample from a classical random variable $\X$ is equivalent to the measurement of the observable $\O = \sum_{i=1}^{d_\hil}\x_i\Lambda(\e_i)$ on a diagonal density operator $\rho$ which will result in $\x_i$ with probability $\tr(\rho\Lambda(\e_i))$. 

\subsection{CPTP maps and Conditional Operators}
In this section we detail the properties of completely positive trace preserving maps that act on density operators \citep{wilde2016,watrous2018}. A linear map $\N:\map(\hil_A)\to\map(\hil_B)$ is \textit{positive} if $\N(X)$ is positive semi-definite for all positive semi-definite operators $X$ in $\map(\hil)$. Such a map is \textit{completely positive} if $\id_R\otimes\N$ is a positive map for an arbitrary reference system $R$. A map is \textit{trace preserving} if $\tr(\N(X))=\tr(X)$.
\begin{definition}[CPTP map]
     A CPTP map $\N:\map(\hil_A)\to\map(\hil_B)$ is a linear, completely positive, trace preserving map. 
\end{definition}
The \textit{adjoint} of a CPTP map $\N:\map(\hil_A)\to\map(\hil_B)$ is the unique linear map $\N\h:\map(\hil_B)\to\map(\hil_A)$ satisfying $
    \braket{Y,\N(X)}=\braket{\N\h(Y),X}$
for all $X$ in $\map(\hil_A)$ and $Y$ in $\map(\hil_B)$. The adjoint of a CPTP map is completely positive and unital, that is, $\mathcal{N}\h(\id_B)=\mathcal{N}\h(\id_A)$ . 

Projective measurements described in Definition \ref{def:proj} is a CPTP map that takes a density operator to a probability distribution \citep{watrous2018}. Such maps are called \emph{quantum-to-classical} maps. CPTP maps also allow the notion of marginalization of density operators over joint spaces. A density operator defined on a system $\hil_A\otimes\hil_B$ can be \textit{marginalized} to a subspace by the partial trace operation.
\begin{definition}[Partial Trace]
    Let $\{\mathbf u_i\}_{i=1}^{d_{B}}$ be an orthonormal basis of the Hilbert space $\hil_B$. The partial trace of a joint density operator $\rho\in\den(\hil_A\otimes\hil_B)$ with respect to $\hil_B$ is the map $\tr_B:\den(\hil_A\otimes\hil_B)\to\den(\hil_A)$ given by
    \begin{align*}
        \rho_{A}=\tr_B(\rho)=\sum_{i=1}^{d_{B}} (I_A \otimes \mathbf{u}_i\h)\rho (I_A \otimes \mathbf{u}_i).
    \end{align*}
\end{definition}
It is easy to see that the partial trace operation is a CPTP map. The adjoint of the map $\N(\rho)=\tr_B (\rho)$ is given by $\N\h(\tr_B (\rho))=\tr_B (\rho)\otimes \id_B$. Several concepts from probability theory, such as expectation values and marginalization, have direct analogs in the framework of density operators. However, satisfactory counterparts for conditional probability and Bayes’ theorem are still lacking. We outline two existing proposals for such analogs.

\subsubsection{Conditional Probability}
There is no formal notion of conditional probability in density operators \citep{wilde2016}. The conditional amplitude operator was introduced by \citet{cerf1997,cerf1999} as an operator analog to conditional probability. This operator mirrors the role of conditional probability in the definition of quantum conditional entropy.
\begin{definition}[Conditional Amplitude Operator]\label{defn:cao} The conditional amplitude operator of a density operator $\rho$ in $\den(\hil_A\otimes\hil_B)$ with respect to $\hil_A$ is 
\begin{align*}
\rho_{B|A}=\exp(\log \rho-\log \rho_A\otimes \id_B)\tag{\tt CAO}\label{CAO}
\end{align*}
\end{definition}
The \ref{CAO} reduces to conditional probability if $\rho$ is diagonal. In general, the \CAO is not a perfect notion of a conditional density operator as it generally produces trace deficient operators. An EM-like algorithm to learn $\rho_A$ from $\rho_{B|A}$ was conjectured in \citet{warmuth2010}. However, this algorithm did not attempt to learn from data and was not supported by correctness guarantees.

\subsubsection{Bayes' Theorem}
The Petz recovery map, a density operator analog of Bayes' theorem was proposed in \citep{petz1986,petz1988} in the context of recoverability in information theory.
\begin{definition}[Petz Recovery Map]
    The Petz recovery map $\mathcal R_{\N,\rho}:\map(\hil_B)\to\map(\hil_A)$ with respect to a CPTP map $\N:\map(\hil_A)\to\map(\hil_B)$ and a positive semi-definite operator $\rho$ in $\map(\hil_A)$ is
    \begin{align*}
        \mathcal{R}_{\N,\rho}(\omega)=\rho^{1/2}\N\h\left(\N(\rho)^{-1/2}\omega\,\N(\rho)^{-1/2}\right)\rho^{1/2}.\tag{\tt PRM}\label{PRM} 
    \end{align*}
\end{definition}
The \ref{PRM} is a completely positive, trace non-increasing map that does not always map density operators to density operators \citep{wilde2016}. It can be seen from the proof in \citet{wilde2016} that
\begin{align*}
    \tr(\mathcal{R}_{\mathcal{N},\rho}(\omega))&=\tr(\Lambda(\N(\rho))\omega)\leq \tr(\omega)
\end{align*}
where $\Lambda(\N(\rho))$ is the projector onto $\mathrm{supp}(\mathcal{N}(\rho))$. This allows for a condition on when the Petz recovery map is a CPTP map. 
\begin{corollary}\label{thm:fullrank}
    The Petz recovery map $\mathcal R_{\N,\rho}:\map(\hil_B)\to\map(\hil_A)$ with respect to a CPTP map $\N:\map(\hil_A)\to\map(\hil_B)$ and a positive semi-definite operator $\rho$ in $\map(\hil_A)$ is a CPTP map if $\N(\rho)$ is a faithful density operator. 
\end{corollary}
The Petz recovery map reduces to the Bayes' theorem when $\rho$ and $\N$ are probabilistic \citep{wilde2016}. The Petz recovery map with respect to the partial trace is not a perfect notion of Bayes rule as 
\begin{align*}
    \omega \ne \tr_B \left(\mathcal{R}_{\tr_B,\rho}(\omega)\right).
\end{align*}
for $\omega$ in $\den(\hil_A)$ and $\rho$ in $\den(\hil_A\otimes\hil_B)$.
\subsection{Quantum Relative Entropy and Inequalities}
In this section, we collect results about quantum relative entropy that will be useful in the specification of the EM algorithm for density operators.

A real valued function $f:\R\to\R$ can be naturally extended to act on density operators via the spectral theorem, 
\begin{align*}
    f(\rho)\mathord{=}\sum_{i=1}^{d_\hil}f(\alpha_i)\Lambda(\mathbf u_i)
\end{align*}as discussed in \citet{bhatia1997}. This construction was used by \citet{vonneumann1955} to generalize the classical Shannon entropy to the quantum setting, defining the entropy of a density operator as
\begin{align*}
    \svn(\rho)=-\tr(\rho\log\rho).
\end{align*}
This quantity, now called the von Neumann entropy, is only a function of the eigenvalues of the density operator. \cite{umegaki1962} extended the classical Kullback-Leibler divergence \citep{cover2006} between probability distributions to the quantum domain.
\begin{definition}[Umegaki Relative Entropy]
    Let $\omega$ and $\rho$ be density operators in $\den(\hil)$ with $\ker(\rho)\subseteq \ker(\omega)$. Their relative entropy is given by \[\du(\omega,\rho)=\tr(\omega\log\omega)-\tr(\omega\log\rho).\]
\end{definition}
The Umegaki relative entropy inherits several properties of KL divergence such as non-negativity and joint convexity with $\du(\omega,\rho)=0$ if and only if $\omega = \rho$.  \citet{lindblad1975} showed that the relative entropy is monotonic under the action of a CPTP map.
\begin{theorem}[Monotonicity of Relative Entropy]\label{thm:mre}
    For all density operators $\omega$ and $\rho$ in $\den(\hil)$ such that $\ker(\omega)\subset\ker(\rho)$, \(\du(\omega,\rho)\geq \du(\N(\omega),\N(\rho))\).
\end{theorem}
The Monotonicity of Relative Entropy (\MRE) is the quantum information theoretical analog of the data processing inequality. \citet{petz1986,petz1988} showed that \MRE is saturated if and only if the action of the CPTP map is \emph{reversible}. We provide a restatement of this result from \citet{hayden2004}.

\begin{theorem}[Theorem 3, \citeauthor{hayden2004}, \citeyear{hayden2004}]\label{thm:petz}
    For states $\omega$ and $\rho$ in $\den(\hil_A)$ and a CPTP map $\N:\den(\hil_A)\to\den(\hil_B)$, $\du(\omega,\rho)=\du(\N(\omega),\N(\rho))$ if and only if there exists a CPTP map $\mathcal{R}$ such that $\mathcal R(\N(\omega))=\omega$ and $\mathcal R(\N(\rho))=\rho$. Furthermore, on the support of $\N(\rho)$, $\mathcal{R}$ is explicity given by the Petz recovery map.
\end{theorem}
 \citet{ruskai2002} proved an equivalent condition using the matrix logarithm for the saturation of the monotonicity of relative entropy.
\begin{theorem}[Ruskai's condition]\label{thm:ruskai}
    For states $\omega$ and $\rho$ in $\den(\hil_A)$ and a quantum chanenl $\N:\den(\hil_A)\to\den(\hil_B)$, $\du(\omega,\rho)=\du(\N(\omega),\N(\rho))$ if and only if 
    \begin{align*}
        \log \omega - \log \rho =\N\h(\log \N(\omega)-\log \N(\rho)).
    \end{align*}
\end{theorem}
Ruskai’s condition, which can obtained by differentiating the equality condition in Petz’s characterization of monotonicity \citep{petz2003}, provides a criterion for perfect recoverability under the Petz recovery map.
\section{A Survey on QBMs}\label{sec:qbm}
In this section, we introduce the Quantum Boltzmann Machine, the predominant density operator latent variable model, and outline the main challenges in training it. 

\subsection{Mathematical Qubits}
A mathematical bit used to define Boltzmann machines and other probabilistic models can take the value $+1$ with probability $p$ and the value $-1$ with probability $1-p$. Quantum Boltzmann Machines are defined using mathematical qubits \citep{nielsen2010}, the density operator analog of a mathematical bit.

A mathematical qubit is a density operator over a Hilbert space $\hil$ of dimension $d_\hil=2$ over $\mathbb C$ with the $+1$ state corresponding to the vector $(1,0)\h$ and the $-1$ state corresponding to $(0,1)\h$. The vectors $(1,0)\h$ and $(0,1)\h$ are known as \textit{computational basis states}. Notably, the Hilbert space also contains an infinite number of states $u = \alpha (1,0)\h + \beta (0,1)\h$ with $\alpha$ and $\beta$ being complex numbers such that $|\alpha|^2 +|\beta|^2=1$. When both $\alpha$ and $\beta$ are non-zero, the state $u$ is said to be in \textit{superposition}, a state in the continuum between the computational basis states.
It is well known that such qubits can be represented as unit vectors on a sphere, known as the Bloch sphere, by the parameterization
\begin{align*}
    u = \cos \frac{\theta}{2} \begin{pmatrix}1 \\ 0\end{pmatrix}+e^{i\varphi}\sin \frac{\theta}{2}\begin{pmatrix}0 \\ 1\end{pmatrix}.
\end{align*}

While a mathematical bit can only be sampled for $+1$ or $-1$, a mathematical qubit can be sampled for $+1$ or $-1$ along any direction in the Bloch sphere with each direction giving different statistics. The observables $\O\in\map(\hil)$ that correspond to sampling along the three coordinate axes are given by the Pauli operators \begin{align*}
    \sigma\iter{x}=\begin{pmatrix} 0&1\\1&0\end{pmatrix},\ \sigma\iter{y}=\begin{pmatrix} 0&-i\\i&0\end{pmatrix}, \text{ and } \sigma\iter{z}=\begin{pmatrix} 1&0\\0&-1\end{pmatrix}
\end{align*}
which together with the identity matrix, spans the set of all Hermitian observables on a qubit. The Pauli operators are traceless with eigenvalues $+1$ and $-1$. When sampled along a coordinate axis $k$, a density operator $\rho$ will return $+1$ or $-1$ with average $\tr(\rho\sigma\iter{k})$ where $k\in\{x,y,z\}$. Sampling from a classical bit is equivalent to sampling along the $z$-axis from a diagonal density operator.

\subsection{Quantum Boltzmann Machines}
The Hamiltonian is a Hermitian operator $\H\in\map(\hil)$ representing the total energy and generalizes the notion of an energy function in classical energy-based models \citep{mahan2013}. A Hamiltonian-based model can be defined using the Gibbs state density matrix of a Hamiltonian given by
\begin{align*}
    \rho(\theta)=\frac{\exp(\H(\theta))}{Z(\theta)},\ \ \text{ with }\H(\theta)\mathord{=}\sum_{r} \theta_{r}\H_{r}
    \text{ and }Z(\theta)\mathord{=}\tr \exp({\H(\theta))}
\end{align*}
where $\H_{r}\in\map{(\hil)}$ are fixed Hermitian operators and $\theta_r\in\R$ are model parameters. If $\hil=\hil_\vis\otimes\hil_\hid$ with only $\hil_\vis$ observed, the model is a latent variable model with density operator over the visible units given by $\rho_\vis(\theta) = \tr_\hid \rho(\theta)$. The Quantum Boltzmann Machine (\qbm) introduced by \citet{amin2018} is a Hamiltonian-based model inspired by the transverse field Ising model \citep{mahan2013}. The Hamiltonian of \qbm with $m$ visible units and $n$ hidden units is given by 
\begin{equation*}\tag{\qbm}\label{QBM}
    \H(\theta)=\left\{\begin{aligned}
        &-\sum_{i=1}^{m}\mathbf a_i\sz_{i}-\sum_{i=1}^{n}\mathbf b_i\sz_{m+i}\textcolor{BlueViolet}{-\sum_{i=1}^{m+n}\Gamma_i\sx_i}\\
    &-\sum_{i=1}^m\sum_{j=1}^n\mathbf W_{ij}\sz_i\sz_j-\sum_{i=1}^m\sum_{j=1}^m\mathbf W^{(\vis)}_{ij}\sz_i\sz_j-\sum_{i=1}^n\sum_{j=1}^n\mathbf W^{(\hid)}_{ij}\sz_{m+i}\sz_{m+j}
    \end{aligned}\right.
\end{equation*}
where $\sz_i$ and $\sx_i$ are $2^{m+n}\times 2^{m+n}$ spin operators that act on $\hil_\vis\otimes\hil_\hid$ defined by
\begin{align}\label{eq:pauli}
    \sigma\iter{k}_i=\id_{2^{i-1}}\otimes \sigma\iter{k}\otimes\id_{2^{m+n-i}}\quad \text{for }k\text{ in }\{x,z\}.
\end{align}
The Hamiltonian of the Heisenberg model is also used to define a \qbm \citep{kappen2020}. Setting $\Gamma_i=0$ for all $i$ in this Hamiltonian gives a diagonal Hamiltonian with each entry corresponding to the energy of a state in a classical Boltzmann Machine. These density operator models are plagued by their $2^{m+n}\times 2^{m+n}$ dimensionality. 

\subsection{Learning QBMs}\label{learning}

In this section, we introduce the two learning objectives for Quantum Boltzmann Machines (\qbm) proposed in the literature and survey work done in learning them.

The first objective, termed the \emph{projective log-likelihood} (\ref{PL}), is based on projective measurements in the visible Hilbert space and was initially introduced for QBM training by \citet{amin2018}. Since then, it has been adopted in numerous subsequent works \citep{anschuetz2019, zoufal2021, demidik2025} and is closely related to the likelihood objective employed in quantum state tomography \citep{hradil1997, rehacek2007}. In this framework, the model maximizes the classical log-likelihood of a data set $\data = \{\v\iter{1},\dots,\v\iter{N}\}$ given by
\begin{align*}
    \Lp(\data,\theta) = \frac{1}{N}\sum_{i=1}^N \log\tr\left((\Lambda(\v\iter{i})\otimes \id_\hid)\rho(\theta)\right)=\frac{1}{N}\sum_{i=1}^N \log \tr\left(\Lambda(\v\iter{i})\tr_\hid(\rho(\theta))\right),\tag{{\tt PL}}\label{PL}
\end{align*}
where $\tr\left(\Lambda(\v\iter{i})\tr_\hid(\rho(\theta))\right)$ is the probability of obtaining sample $\v\iter{i}$ under a projective measurement. The presence of projection operators in the learning objective complicates gradient estimation. To circumvent this, \citet{amin2018} introduced a bound-based optimization scheme derived from the Golden-Thompson inequality. However, this approach cannot update the transverse field parameters that control the model’s quantum effects.

The second objective, termed the \emph{quantum log-likelihood} (\ref{QL}), is based on the Umegaki relative entropy and was introduced in \citet{kieferova2017}. A data set with inbuilt quantum correlations or a data set with artificially induced quantum correlations \citep{kappen2020} is used to build an empirical data density operator $\etav$ in $\den(\hil_\vis)$. The maximization of the quantum log-likelihood 
\begin{align*}
    \Lu(\etav,\theta) = \tr\left(\etav\log \tr_\hid \rho(\theta)\right) = \tr\left(\etav\log\rhov(\theta)\right)\tag{{\tt QL}}\label{QL}
\end{align*}
is equivalent to the minimization of the Umegaki relative entropy between $\etav$ and $\rhov(\theta)$. The appearance of the logarithm of a partial trace in this expression prevents the derivation of a closed-form gradient \citep{wiebe2019}. This challenge closely parallels the difficulty encountered in classical latent variable models, where the objective involves a logarithm of a summation. Although both approaches reduce to maximizing the classical log-likelihood when the density operators are diagonal, no formal relationship between the two objective functions has been established to date.

Recent work on QBMs has primarily focused on formulations without hidden units \citep{kappen2020, patel2024, patel2024a, huijgen2024}, owing to the operator-theoretic complications introduced by the partial trace in \ref{QL} during gradient computation. Studies that incorporate hidden variables have relied on heuristic gradient evaluation techniques \citep{amin2018, wiebe2019, anschuetz2019, zoufal2021}, typically involving approximate preparation of Gibbs state density operators \citep{bilgin2010gibbs, chen2023gibbs, ding2025gibbs} without any theoretical guarantees. A related yet orthogonal direction concerns the realization of QBMs on quantum devices, which is a subsequent stage following the algorithmic development of the underlying algorithms \citep{benedetti2017, nelson2022, shibukawa2024}. These methods typically fail to scale beyond toy data sets of 12-bit binary strings, largely because of limited access to quantum hardware, the exponential cost of simulating quantum systems, and the memory demands of representing and optimizing QBMs on classical machines. Even when memory is not the bottleneck, computing the gradient remains a major challenge for models with hidden units. Therefore, the empirical evaluation of these models has been limited to toy data sets, and their performance on realistic data remains an open question.
\section{Density Operator Latent Variable Models}\label{sec:dolvm}
In this section, we undertake a formal study of LVMs arising out of density operators. We prove a fundamental inequality between the differing notions of log-likelihood used to train \qbms and showcase a novel quantum evidence lower bound.

We assume a data set is obtained from $N$ arbitrary projective measurements of an underlying density operator $\psi_\vis$ defined on the Hilbert space $\hil_\vis$. As discussed in \cref{sec:qi}, each projective measurement yields a pure state corresponding to a projection operator, so each sample in the data set can be represented by a unit vector in $\hil_\vis$. The \emph{empirical data density operator} $\etav \in \den(\hil_\vis)$ for a data set $\data = \{\v\iter{1}, \dots, \v\iter{N}\}$ is defined as
\begin{align}\label{eqn:eddo}
\etav = \frac{1}{N}\sum_{i=1}^N \Lambda(\v\iter{i}).
\end{align}
We now formally define a density operator-based LVM.
\begin{definition}
    A Density Operator Latent Variable Model (\dolvm) specifies the density operator $\rhov\in\den(\hil_\vis)$ on observables in $\hil_\vis$ through a joint density operator $\rho\in\den(\hil_\vis\otimes\hil_\hid)$ as $\rhov=\tr_\hid \left(\rho (\theta)\right)$ where the space $\hil_\hid$ is not observed.
\end{definition}

The objective of learning a \dolvm is to obtain parameters $\theta$ such that $\rhov(\theta)$ is as close as possible to a ground truth density operator $\psi_\vis$ from which samples are drawn. This is can be achieved by training the projective log-likelihood (\ref{PL}) or the quantum log-likelihood (\ref{QL}) depending on the data provided. As noted in \cref{learning}, the use of either log-likelihood with hidden units comes with its own distinct set of challenges. The use of \ref{QL}, however, has quantum information theoretic interpretations related to the Umegaki relative entropy and potentially paves the way for employing operator theoretic inequalities to aid the learning process. We now show that for a data set $\data$ arising out of projective measurements, the quantum log-likelihood provides a lower bound to the projective log-likelihood.

\begin{lemma}\label{thm:learning}
    For a data set $\data=\{\v\iter{1},\dots,\v\iter{N}\}$ arising out of projective measurements and a \dolvm $\rho(\theta)$, let the empirical data density operator $\etav$ be as defined in \cref{eqn:eddo}. Then,  $\Lp(\data,\theta)\geq \Lu(\etav,\theta)$.
\end{lemma}
The proof uses stated \cref{thm:dragomir} stated in Appendix \ref{app:analysis}.
\begin{proof}
    The logarithm is a continuous concave function in the positive real and each $\Lambda(\v\iter{i})$ is a positive semi-definite, unit trace operator in $\map(\hil_\vis)$. Then by \cref{thm:dragomir}, we have the inequality
    \begin{align*}
        \log \tr\left(\Lambda(\v\iter{i})\rhov(\theta)\right)\geq \tr\left(\Lambda(\v\iter{i})\log\rhov(\theta)\right)
    \end{align*}
    for each data point $\v\iter{i}\in \data$. Averaging the inequality across all data points results in
    \begin{align*}
        \frac{1}{N}\sum_{i=1}^N\log \tr\left(\Lambda(\v\iter{i})\rhov(\theta)\right)\geq \frac{1}{N}\sum_{i=1}^N\tr\left(\Lambda(\v\iter{i})\log\rhov(\theta)\right),
    \end{align*}
    which by the linearity of trace can be written as
    \begin{align*}
        \frac{1}{N}\sum_{i=1}^N\log \tr\left(\Lambda(\v\iter{i})\rhov(\theta)\right)&\geq \tr\left(\left(\frac{1}{N}\sum_{i=1}^N\Lambda(\v\iter{i})\right)\log\rhov(\theta)\right),\\
        \Lp(\data,\theta)&\geq \Lu(\etav,\theta)
    \end{align*}
    concluding our proof.
\end{proof}
The value of the quantum log-likelihood provides a lower bound on the the projected log-likelihood. However, training on \ref{QL} is also intractable due to the partial trace. 

The intractability of the gradient of the log-likelihood in probabilistic LVMs is addressed by the EM algorithm. As probabilistic LVMs are a special case of density operator latent variable models (\dolvm), the training challenges they face persist in \dolvms. \dolvms also introduce additional operator-theoretic complications due to the absence of conditional density operators and the limited applicability of existing operator versions of Jensen’s inequality. Consequently, the standard derivation of the EM algorithm does not extend naturally to the setting of density operator models. In this light, we investigate possible EM-like algorithms to train \ref{QL} using quantum information. The first step in the derivation of an EM algorithm for \dolvms is the derivation of an evidence lower bound for data operators $\etav$ that can be used in an EM framework. Since the classical ELBO inequality can be derived using the data processing inequality for marginalization, we derive a quantum ELBO using \cref{thm:mre}.

\begin{lemma}[Quantum ELBO]\label{thm:qelbo}
     Let $\mathcal{J}(\etav)=\{\eta\mid \eta\in\den(\hil_\vis\otimes\hil_\hid)\ \&\ \tr_\hid \eta=\etav\}$ be the set of feasible extensions for data $\etav\in\den(\hil_\vis)$. Then for a \dolvm $\rho(\theta)$ and $\eta\in\mathcal{J}(\etav)$, 
    \begin{align}
        \Lu(\etav,\theta)\geq \QELBO(\eta,\theta) =\tr(\eta\log\rho(\theta))+S(\eta)-S(\etav).\tag{{\tt QELBO}}\label{QELBO}
    \end{align}
\end{lemma}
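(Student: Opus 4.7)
The plan is to bypass the classical route (conditional distribution plus Jensen's inequality) and instead apply \cref{thm:mre} (Monotonicity of Relative Entropy) directly to the joint pair $(\eta,\rho(\theta))$ on $\hil_\vis\otimes\hil_\hid$, using the partial trace $\tr_\hid$ as the contracting operation. Intuitively, monotonicity of $\du$ under the partial trace plays the same role for density operators that Jensen's inequality plays for probability distributions in the classical ELBO derivation, with the added flexibility that $\eta$ is any extension of $\etav$, since we cannot sensibly define a quantum ``posterior'' to tensor against.

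Concretely, I would execute the following steps. First, note that by feasibility $\tr_\hid\eta=\etav$, and by definition of the \dolvm, $\tr_\hid\rho(\theta)=\rhov(\theta)$. Second, apply \cref{thm:mre} with $A=\vis$, $B=\hid$ to deduce
\begin{equation*}
\du(\eta,\rho(\theta))\;\geq\;\du(\etav,\rhov(\theta)).
\end{equation*}
Third, expand each side using the Umegaki definition $\du(\sigma,\tau)=\tr(\sigma\log\sigma)-\tr(\sigma\log\tau)$ and rewrite $-\tr(\sigma\log\sigma)$ as the von Neumann entropy $S(\sigma)$. The left-hand side becomes $-S(\eta)-\tr(\eta\log\rho(\theta))$ and the right-hand side becomes $-S(\etav)-\L(\theta)$, using $\L(\theta)=\tr(\etav\log\rhov(\theta))$. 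Rearranging then yields
\begin{equation*}
\L(\theta)\;\geq\;\tr(\eta\log\rho(\theta))+S(\eta)-S(\etav),
\end{equation*}
which is precisely the \ref{QELBO} claim.

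Before invoking \cref{thm:mre} I would also verify the support hypothesis $\ker(\rho(\theta))\subseteq\ker(\eta)$ needed for the relative entropies to be finite; the analogous condition on the marginals, $\ker(\rhov(\theta))\subseteq\ker(\etav)$, follows automatically because partial trace cannot enlarge the kernel of a positive operator. This is the natural operator-theoretic analogue of absolute continuity in the classical EM setting, and I would simply make it a standing assumption on the \dolvm. Beyond this housekeeping no real obstacle remains: the argument collapses to one line of monotonicity plus algebraic rearrangement. The genuinely nontrivial step is conceptual rather than technical --- recognizing that Monotonicity of Relative Entropy, applied over a \emph{free} extension $\eta\in\mathcal{J}(\etav)$, is exactly the right quantum replacement for Jensen's inequality. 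Once that identification is made, the inequality essentially writes itself.
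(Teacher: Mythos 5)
Your proposal is correct and follows essentially the same route as the paper: apply Monotonicity of Relative Entropy to the pair $(\eta,\rho(\theta))$ under the partial trace $\tr_\hid$, expand the Umegaki definition, and rearrange to obtain \ref{QELBO}. Your additional remark on the kernel/support hypothesis is sound housekeeping that the paper leaves implicit.
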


\begin{proof}
    Since $\tr_\hid \eta = \etav$, by \cref{thm:mre} we obtain
    \begin{align*}
        \du(\eta,\rho(\theta))\geq \du(\etav,\rhov(\theta)).
    \end{align*}
    Expanding the expression for Umegaki relative entropy and rearranging the terms results in
    \begin{align*}
        \tr(\eta\log\eta)-\tr(\eta\log\rho(\theta))&\geq \tr(\etav\log\etav)-\tr(\etav\log\rhov(\theta)), \text{ and }\\
        \tr(\etav\log\rhov(\theta))&\geq \tr(\eta\log\rho(\theta))-\tr(\eta\log\eta)+\tr(\etav\log\etav),\\
        \Lu(\etav,\theta)&\geq \tr(\eta\log\rho(\theta)) + \svn(\eta) - \svn(\etav).
    \end{align*}
    which is the required inequality.
\end{proof}
 The \ref{QELBO} expression does not involve projection operators or the partial trace operation that make the optimization of the projected log-likelihood or the quantum log-likelihood hard.  The \ref{QELBO} serves as the density-operator analog of the \ref{ELBO}, since the monotonicity of relative entropy reduces to the data processing inequality when the density operators are diagonal. The classical EM algorithm is a consequence of the evidence lower bound being a minorant of the log-likelihood. As noted in \cref{sec:lvm}, this can be seen as a saturation of the data processing inequality for marginalization. However, it is well known that the monotonicity of relative entropy is often not saturated for the partial trace operation \citep{lesniewski1999,berta2015,wilde2015,carlen2020,cree2022}. Hence, a similar derivation is not directly possible. In the following section, we investigate an expectation-maximization framework for the quantum ELBO.
\section{The DO-EM Framework}\label{sec:doem}
In this section, we derive an expectation-maximization framework for density operators using \ref{QELBO}. Inspired by the information geometric interpretation of the EM algorithm detailed in \cref{sec:lvm}, we study a quantum information projection problem to saturate \ref{QELBO}.
\subsection{A Quantum Information Projection Problem}

In this section we study the $I$-projection \citep{cover2006} problem for density operators
and show conditions when \ref{PRM} can solve this problem.  
\begin{definition}[Quantum Information Projection]\label{def:qip}
    The Quantum Information Projection of a density operator $\rho$ in $\den(\hil_A\otimes \hil_B)$ onto a density operator $\omega$ in $\den(\hil_A)$ with respect to the partial trace $\tr_B:\den(\hil_A\otimes\hil_B)\to\den(\hil_A)$ is the density operator $\xi^*$ in $\den(\hil_A\otimes\hil_B)$ such that
    \begin{align*}
    \xi^*=\underset{\tr_B(\xi)=\omega}{\argmin}\,\du(\xi,\rho).\label{QIP}\tag{{\tt QIP}}
    \end{align*}
\end{definition}
To the best of our knowledge, this problem has not been studied in literature. We know from \cref{thm:mre} that the theoretical minimum attained by the objective function in \ref{QIP} is $\du(\omega,\tr_B(\rho))$ though it may not always be attained. According to \cref{thm:petz}, the solution to \ref{QIP} should be the Petz recovery map when this lower bound is achieved. Inspired by this connection, we explore sufficiency conditions for when the Petz recovery map is the solution to the \ref{QIP} problem.

As noted in \cref{sec:qi}, the Petz recovery map is not a trace preserving map. The Petz recovery map cannot be a solution to \ref{QIP} if it is not unit trace as it will not partial trace to $\omega$. Hence, the faithfulness of $\tr_\hid(\rho)$ is a necessary condition for the Petz recovery map to sovle \ref{QIP}. We now outline two commutative relations sufficient to ensure that the Petz recovery map solves the \ref{QIP} problem.
\begin{definition}[Sufficient Conditions]
\label{defn:conds} Two density operators $\rho$ in $\den(\hil_A\otimes\hil_B)$ and $\omega$ in $\den(\hil_A)$ satisfy the \emph{sufficient conditions} if: 
\begin{enumerate}
    \item $\tr_\hid(\rho)$ is faithful
    \item $[\rho,\tr_B(\rho)\otimes\id_B]=0$, and
    \item $[\omega,\tr_B(\rho)]=0$.
\end{enumerate}
\end{definition}
The faithfulness condition guarantees that the Petz recovery map provides a valid density operator. The second condition ensures that $\rho$ is block diagonal when $\tr_B(\rho)$ is rotated to be diagonal. The final condition ensures that the $\omega$ and $\tr_B(\rho)$ share the same pure states in their spectral decomposition and only differ in probability of each state, making information projection easier by mapping blocks over $\hil_B$ to the appropriate pure state in $\hil_A$. We defer formal statements on these interpretations to \cref{sec:classic}. When density operators satisfy these sufficiency conditions, we are able to solve the \ref{QIP} problem with the Petz recovery map. We formally state this result in Theorem \ref{thm:qip} below.

\begin{theorem}\label{thm:qip}
    Recall the \ref{QIP} problem in Definition \ref{def:qip}. Suppose $\rho$ and $\omega$ are two density operators in $\den(\hil_A\otimes\hil_B)$ and $\den(\hil_A)$ respectively such that the \conds are satisfied, the solution to the \ref{QIP} problem is the Petz recovery map
    \begin{align*}
        \xi^*=\mathcal{R}_{\tr_B,\rho}(\omega).
    \end{align*}
\end{theorem}
The proof of this theorem uses Lemma \ref{thm:ksum} stated in Appendix \ref{app:expmlogm}.
\begin{proof}
    Since $\tr_\hid(\rho)$ is faithful, the Petz recovery map gives a valid density operator by Lemma \ref{thm:fullrank}. Note that if two matrices commute, their squareroots also commute because they share the same eigenspace. The Petz recovery map with respect to $\rho$  and $\N$ is equal to 
    \begin{align*}
        \mathcal{R}_{\N,\rho}(\omega) = \log \rho^{1/2}\N\h(\N(\rho)^{-1/2}\omega\N(\rho)^{-1/2})\rho^{1/2}.
    \end{align*}
    Since $[\omega,\tr_B(\rho)]=0$, $[\tr_B(\rho)^{-1}\omega,\tr_B(\rho)]=0$ and $[(\tr_B(\rho)^{-1}\omega)\otimes\id_B,\rho]=0$. Therefore, the Petz recovery map is equal to
    \begin{align*}
        \mathcal{R}_{\tr_\hid,\rho}(\omega) &= \rho(\tr_B(\rho)^{-1}\omega\otimes\id_B),\\
        \log \mathcal{R}_{\tr_B,\rho}(\omega) &= \log \rho + \log (\tr_B(\rho)^{-1}\omega\otimes\id_B),\text{ and }&\text{(commutativity)}\\
        \log \mathcal{R}_{\tr_B,\rho}(\omega) &= \log \rho + \log (\tr_B(\rho)^{-1}\omega)\otimes\id_B.&\text{(Lemma \ref{thm:ksum})}
    \end{align*}
    We now show that the Petz recovery map and $\rho$ satisfy the Ruskai condition.
    \begin{align*}
        \log \rho-\log \mathcal{R}_{\tr_B,\rho}(\omega) &= \log\rho - \log \rho - \log (\tr_B(\rho)^{-1}\omega)\otimes\id_B\\
        &=-\log(\tr_B(\rho)^{-1})\otimes\id_B - \log(\omega)\otimes\id_B&\text{(commutativity)}\\
        &=\log(\tr_B(\rho))\otimes\id_B - \log(\omega)\otimes\id_B.&(\log(A^{-1})=-\log A)
    \end{align*}
    Hence, {\tt PRM} is a density operator that satisfies Ruskai's condition in \cref{thm:ruskai}. Therefore, $\du(\mathcal{R}_{\tr_B,\rho}(\omega),\rho)=\du(\omega,\tr_B\rho)$. \cref{thm:petz} implies that $\tr_B \mathcal{R}_{\tr_B,\rho}(\omega)=\omega$. We have shown that the Petz recovery map is both a feasible density operator that partial traces to $\omega$ and solves the \ref{QIP} problem.
\end{proof}
We note that this technique is not generally applicable to arbitrary CPTP maps as Lemma \ref{thm:ksum} does not apply to arbitrary completely positive unital maps. We now develop our EM framework for density operators.

\subsection{DO-EM through the lens of Minorant-Maximization}

In this section, we present the \textbf{D}ensity \textbf{O}perator \textbf{E}xpectation \textbf{M}aximization (\doem) algorithm from a Minorant-Maximization perspective and discuss its advantages over direct maximization of the log-likelihood. We prove that the \doem algorithm can achieve log-likelihood ascent at every iteration when the \conds are satisfied.

For fixed parameters $\theta^{\mathrm{(old)}}$, the \ref{QELBO} is maximized when $\eta$ is the \ref{QIP} of $\rho(\theta)$ onto the set of feasible extensions. This allows us to define a potential minorant $\mathcal Q$ for the log-likelihood. 
\begin{equation*}\label{eq:minorant}
    \begin{aligned}
    \eta{(\theta^{\mathrm{(old)}})}&=\underset{\tr_\hid\eta=\etav}{\argmin}\,\du(\eta,\rho(\theta^{\mathrm{(old)}})) \\
\mathcal{Q}(\theta;\theta^{\mathrm{(old)}})&=\mathrm{QELBO}(\eta{(\theta^{\mathrm{(old)}})},\rho(\theta))
\end{aligned}
\end{equation*}
We use $\mathcal{Q}$ to define the \doem algorithm in \cref{alg:doem}.
\begin{algorithm}[H]
   \caption{Density Operator Expectation Maximization}
   \label{alg:doem}
\begin{algorithmic}[1]
   \STATE {\bfseries Input:} Data density operator $\etav$ and model parameters $\theta^{({0})}$
   \WHILE{not converged}
   \STATE \textbf{E Step:} $\eta^{(t)}= \underset{\eta: \tr_\hid\eta=\etav}{\argmin}\ \du(\eta,\rho(\theta^{(t)}))$
   \STATE \textbf{M Step:} $\theta^{(t+1)}= \underset{\theta}{\argmax}\ \tr(\eta^{(t)}\log\rho(\theta))$
   \ENDWHILE
\end{algorithmic}
\end{algorithm}
As the saturation of monotonicity of relative entropy with respect to the partial trace is not guaranteed, it is not possible to guarantee log-likelihood ascent without additional assumptions. Models and QIPs that obey Ruskai's condition provably achieve log-likelihood ascent under the \doem  procedure. 
\begin{theorem}[$\mathcal Q$ is a minorant]\label{thm:ascent}
    Let $\etav$ be a data density operator and  $\rho(\theta)$ be a \dolvm trained by the \doem  algorithm. If $\rho(\theta^{(t)})$ and its \ref{QIP}, $\eta^{(t)}$, obey the Ruskai condition, then $\mathcal Q$ is a minorant of the log-likelihood. That is,
    \begin{align*}
        \L (\theta^{(t)})\leq \L(\theta^{(t+1)})\text{ where }\theta^{(t+1)}=\argmax_\theta \mathcal Q(\theta;\theta^{(t)}).
    \end{align*}
\end{theorem}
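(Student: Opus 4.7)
The plan is to execute the standard Minorant--Maximization argument: first show that $\mathcal{Q}(\cdot;\theta^{(t)})$ is a tangent minorant of $\L$ at $\theta^{(t)}$, then chain this with the M-step's optimality to obtain the ascent property.

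The global inequality $\L(\theta) \geq \mathcal{Q}(\theta;\theta^{(t)})$ for every $\theta$ is essentially immediate: since the E-step produces $\eta^{(t)}$ satisfying $\tr_\hid \eta^{(t)} = \etav$, the density operator $\eta^{(t)}$ belongs to $\mathcal{J}(\etav)$, and instantiating the quantum ELBO bound (\cref{thm:qelbo}) with this choice yields $\L(\theta) \geq \QELBO(\eta^{(t)},\theta) = \mathcal{Q}(\theta;\theta^{(t)})$. So this half of the minorant property requires nothing beyond invoking \cref{thm:qelbo}.

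The main obstacle is establishing tangency $\L(\theta^{(t)}) = \mathcal{Q}(\theta^{(t)};\theta^{(t)})$. I would rewrite each side using the identity $\tr(\omega\log\omega)-\tr(\omega\log\sigma)=\du(\omega,\sigma)$ to obtain
\[
\L(\theta^{(t)}) = -S(\etav) - \du(\etav,\rhov(\theta^{(t)})),\qquad \mathcal{Q}(\theta^{(t)};\theta^{(t)}) = -S(\etav) - \du(\eta^{(t)},\rho(\theta^{(t)})).
\]
Tangency thereby reduces to $\du(\eta^{(t)},\rho(\theta^{(t)})) = \du(\etav,\rhov(\theta^{(t)}))$, that is, to saturation of the monotonicity of relative entropy (\cref{thm:mre}). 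By \cref{thm:ruskai} this saturation is equivalent to the Ruskai condition holding for the pair $(\eta^{(t)},\rho(\theta^{(t)}))$, which is precisely the hypothesis of the theorem. This is the step where the assumption is consumed; I would additionally remark that \cref{thm:qip} gives a concrete operational sufficient condition --- namely \conds --- under which the QIP is realised by the Petz recovery map and automatically yields an $\eta^{(t)}$ obeying Ruskai's condition, making the abstract hypothesis verifiable in practice.

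With the minorant and tangency established, ascent follows by a standard three-link chain:
\[
\L(\theta^{(t+1)}) \;\ge\; \mathcal{Q}(\theta^{(t+1)};\theta^{(t)}) \;\ge\; \mathcal{Q}(\theta^{(t)};\theta^{(t)}) \;=\; \L(\theta^{(t)}),
\]
where the first inequality is the minorant property at $\theta^{(t+1)}$, the middle inequality uses $\theta^{(t+1)} \in \argmax_\theta \mathcal{Q}(\theta;\theta^{(t)})$ from the M-step, and the final equality is the tangency just established.
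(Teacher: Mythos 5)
Your proposal is correct and follows essentially the same minorant--maximization route as the paper's proof: establish tangency $\L(\theta^{(t)})=\mathcal Q(\theta^{(t)};\theta^{(t)})$ from Ruskai's condition (via saturation of \cref{thm:mre}), combine with the \ref{QELBO} minorant property and the M-step optimality in the standard three-link chain. If anything, you make explicit the relative-entropy rewriting behind the tangency step, which the paper asserts in one line.
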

\begin{proof}
    From Ruskai's condition and \cref{thm:qelbo}, we obtain
\begin{align}\label{eq:step1}
    \L(\theta^{(t)})=\mathcal{Q}(\theta^{(t)},\theta^{(t)}).
\end{align}
By the definition of $\theta^{(t+1)}$, 
\begin{align}\label{eq:step2}
    \mathcal{Q}(\theta^{(t)},\theta^{(t)})\leq \mathcal{Q}(\theta^{(t)},\theta^{(t+1)}).
\end{align}
    Combining these inequalities with the QELBO, we obtain
    \begin{equation*}
        \begin{aligned}          
        \mathcal{Q}(\theta^{(t)},\theta^{(t+1)})&\leq \L(\theta^{(t+1)});&\\
\mathcal{Q}(\theta^{(t)},\theta^{(t)})&\leq \L(\theta^{(t+1)})&\text{ by \eqref{eq:step2}};\\
\L(\theta^{(t)})&\leq \L(\theta^{(t+1)})&\text{ by \eqref{eq:step1}}.
        \end{aligned}
    \end{equation*}
\end{proof}
A practical approach to ensure that the model and \ref{QIP} satisfy Ruskai's condition is to ensure that the model and the data operators satisfy the \conds.
\begin{corollary}\label{thm:example}
    For a target density operator $\etav$ and model $\rho(\theta)$ satisfying \conds, the E step is the Petz recovery map $\mathcal{R}_{\rho,\tr_\hid}(\etav)$. Moreover, such a model trained using the \doem  algorithm achieves provable likelihood ascent at every iteration.
\end{corollary}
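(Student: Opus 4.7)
The plan is to decompose the corollary into its two assertions and chain together Theorems \ref{thm:qip} and \ref{thm:ascent}, which do essentially all the work.

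For the first assertion, I would begin by matching the E step of \cref{alg:doem} to the \ref{QIP} problem. The E step computes
\[
\eta^{(t)} = \underset{\eta:\ \tr_\hid \eta = \etav}{\argmin}\ \du(\eta,\rho(\theta^{(t)})),
\]
which is exactly \ref{QIP} with $\omega = \etav$, $\hil_A = \hil_\vis$ and $\hil_B = \hil_\hid$. Since the hypothesis says the pair $(\etav,\rho(\theta))$ satisfies \conds, the premises of \cref{thm:qip} are met, and one may apply that theorem directly to conclude that the minimizer is
$\eta^{(t)} = \mathcal R_{\rho(\theta^{(t)})}(\etav)$, the Petz Recovery Map evaluated at $\etav$. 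This is a one-line invocation; the only subtlety worth flagging is that the full-rank portion of \conds guarantees $\rho_A^{-1/2}$ in \ref{PRM} is well defined, so the expression makes sense.

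For the second assertion, the plan is to verify the hypothesis of \cref{thm:ascent} for every iterate and then invoke it. The proof sketch of \cref{thm:qip} asserts that under \conds one has $[\rho,\mathcal R_\rho(\omega)] = 0$, which implies that $\rho(\theta^{(t)})$ and its \ref{QIP} $\eta^{(t)} = \mathcal R_{\rho(\theta^{(t)})}(\etav)$ jointly satisfy Ruskai's condition (\cref{thm:ruskai}). Applying \cref{thm:ascent} then yields that $\mathcal Q(\,\cdot\,;\theta^{(t)})$ is a genuine minorant of $\L$, touching it at $\theta^{(t)}$, and therefore
\[
\L(\theta^{(t+1)}) \ge \mathcal Q(\theta^{(t+1)};\theta^{(t)}) \ge \mathcal Q(\theta^{(t)};\theta^{(t)}) = \L(\theta^{(t)}),
\]
where the middle inequality is the definition of the M step. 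Iterating gives monotone log-likelihood ascent.

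The main obstacle I anticipate is not any hard calculation but rather a hygiene point: one must argue that \conds continues to hold at every iterate $\theta^{(t)}$, not merely at the initialization. This is implicit in the phrase ``such a model,'' but to make the corollary truly uniform over iterations, I would either (i) assume the parametric family $\{\rho(\theta)\}$ is closed under \conds with respect to the fixed $\etav$ (i.e.\ full-rankness, the required separable decomposition, and the commutation $[\etav,\tr_\hid \rho(\theta)] = 0$ are preserved for all $\theta$ encountered), or (ii) restrict $\theta$ to the subset of parameters where \conds holds and note that the M step can be taken over this subset without loss. With this hygiene in place, the two invocations above deliver the corollary immediately.
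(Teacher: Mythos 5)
Your proof is correct and follows the same route as the paper: identify the E step with the \ref{QIP} problem so that \cref{thm:qip} yields the Petz recovery map, then note that the Ruskai condition established there lets you invoke \cref{thm:ascent} for likelihood ascent. Your additional remark that \conds must hold at every iterate $\theta^{(t)}$ (not just at initialization) is a legitimate hygiene point that the paper leaves implicit in the phrase ``such a model,'' but it does not change the argument.
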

\begin{proof}
    We proved in \cref{thm:qip} that the Petz recovery map satisfies the Ruskai condition under the \conds. Hence, the statement is an immediate corollary to \cref{thm:ascent}.
\end{proof}
Thus, in the \doem algorithm, the Petz recovery map takes on the role that conditional probability plays in the classical EM algorithm.

\subsection{Commentary on the \doem Algorithm}
The \doem algorithm can be considered a density operator analog of the classical EM algorithm. 
We recover the classical EM algorithm from \doem for discrete models if $\etav$ and $\rho(\theta)$ are diagonal. 

When the \conds are satisfied, the \textbf{E Step} in \doem finds a feasible extension $\eta$ whose Conditional Amplitude Operator ({\tt CAO}) is equal to that of the model $\rho(\theta)$. 
\begin{corollary}
    If two density operators $\eta$ and $\rho$ in $\den(\hil_\vis\otimes\hil_\hid)$ satisfy the Ruskai condition for the partial trace $\tr_B:\den(\hil_\vis\otimes\hil_\hid)\to\den(\hil_\vis)$, then $\eta$ and $\rho$ have the same conditional amplitude operator $\eta_{\hid|\vis}=\rho_{\hid|\vis}$ with respect to the Hilbert space $\hil_\vis$.
\end{corollary}
\begin{proof}
    By the Ruskai condition,
    \begin{align*}
        \log \eta - \log \rho = \N\h(\log\N(\eta)-\log\N(\rho)).
    \end{align*}
    Specializing for $\N(\rho)=\tr_\hid(\rho)$ and $\N\h(\omega)=\omega\otimes\id_\hid$,
    \begin{align*}
        \log \eta - \log \rho &= (\log\tr_\hid(\eta)-\log\tr_\hid(\rho))\otimes\id_\hid,&\\
        \log \eta - \log\tr_\hid(\eta)\otimes\id_\hid &= \log \rho - \log\tr_\hid(\rho)\otimes\id_\hid.&\text{(distributivity)}
    \end{align*}
    Exponentiating both sides, we obtain 
        \begin{align*}
        \exp\left(\log \eta - \log\tr_\hid(\eta)\otimes\id_\hid\right) &= \exp\left(\log \rho - \log\tr_\hid(\rho)\otimes\id_\hid\right),\\
        \eta_{\scriptscriptstyle{\hid|\vis}} =\rho_{\scriptscriptstyle{\hid|\vis}}.
    \end{align*}
    the conditional amplitudes operators as defined in Definition \ref{defn:cao}.
\end{proof}
The Petz recovery map under the \conds is the conditional amplitude operator reweighted by $\etav$ to give a valid density operator. This reduces to classical E step if the operators involved are diagonal as the \ref{CAO} reduces to the conditional probability and {\tt PRM} reduces to Bayes rule. 

A key feature of the classical EM algorithm is the log-of-sums arising out of marginalization in the log-likelihood turns it into a sum-of-logs in the M-step. As discussed in \cref{sec:qbm}, log-likelihoods that involve partial traces or projection operators are often computationally intractable. In contrast, the \textbf{M-step} of the \doem algorithm maximizes an objective that avoids these obstacles entirely. For a Hamiltonian-based model
\begin{equation*}
    \rho(\theta)=\exp(\mathcal{H}(\theta))/Z(\theta), \qquad
\mathcal{H}(\theta)=\sum_r \theta_r \mathcal{H}_r,
\end{equation*}
and an E-step output $\eta^{(t)}$, the gradient of $\mathcal{Q}(\theta;\theta^{\textrm{old}})$ in the M-step with respect to~$\theta_r$ is
\begin{equation}\label{eq:gradient}
  \frac{\partial}{\partial\theta_r}\mathcal{Q}(\theta;\theta^{\textrm{old}})
   = \langle \mathcal{H}_r\rangle_{\eta^{(t)}} 
     - \langle \mathcal{H}_r\rangle_{\rho(\theta)}.  
\end{equation}
Because the M-step maximizes $\tr(\eta^{(t)} \log \rho(\theta))$, its structure matches the log-likelihood of a fully visible model with no hidden units. This allows direct use of standard training techniques for fully visible density-operator models. A proof of the gradient expression, following the Lie-Trotter-Suzuki argument in \citet{kappen2020}, appears as Corollary \cref{thm:LTS} in Appendix \ref{app:mstep}. 

As expected, the M-step optimization is far simpler than computing gradients for direct maximization of the projective or quantum log-likelihoods. As demonstrated in \cref{sec:exp}, the \doem algorithm runs roughly two orders of magnitude faster than directly maximizing the projective log-likelihood for for density operators of size $2^{10}\times 2^{10}$. In the next section, we specialize \dolvms and \doem for classical data, enabling scaling to standard ML data sets. 

\section{Classical-Quantum Models}\label{sec:classic}
In this section, we focus on adapting \dolvms to standard image data sets in machine learning. Training a \dolvm assumes that the data set was generated by projective measurements on a ground truth density operator $\psi$. We first define such data sets.
\begin{definition}[Classical Data Set]
    A data set is classical if it results from repeated projective measurements of a density operator $\psi \in \den(\hil_\vis)$ using a fixed measurement operator $\mathcal X = \sum_{i=1}^{d_\vis} \x_i \,\Lambda(\u_i)$.
\end{definition}
Standard projective measurements of $\psi$ with respect to $\mathcal X$ correspond to sampling from a probability distribution \citep{watrous2018}. Specifically, outcome $\x_i$ occurs with probability
\begin{align*}
    \Pr(\X = \x_i) = \mathrm{tr}(\psi\,\Lambda(\u_i)).
\end{align*}
Thus, repeated measurements amount to drawing independent and identically distributed samples from this underlying distribution. Standard machine learning data sets can be viewed as arising from such measurements.

For a classical data set, the eigenspace of the empirical data density operator $\etav$ given by \cref{eqn:eddo} coincides with the basis in which the projective measurements were performed. The condition $[\etav, \tr_\hid \rho] = 0$ in the Sufficient Conditions therefore restricts the model class to those whose visible marginals share this same eigenspace. This restriction reflects a fundamental limitation of the data. Since all samples are obtained in a single fixed measurement basis, the observations reveal only the diagonal components of the underlying density operator in that basis. Any coherence or structure present in other orthonormal directions remains completely unobserved and unlearnable for the model. Enforcing that the model align with the measurement basis thus prevents it from trying to infer features that the data does not provide. We now investigate the second commutativity relation in the \conds, $[\rho,\rhov\otimes\id_\hid]=0$, which governs the interaction between the visible and the latent spaces.

\begin{lemma}\label{thm:conditional}
    Consider a density operator $\rho\in\den(\hil_\vis\otimes\hil_\vis)$ such that $\rhov$ has non-degenerate eigenvalues. Then $[\rho,\rhov\otimes\id_\hid]=0$ if and only if $\rho$ is a classical-quantum state
    \begin{align*}
        \rho=\sum_{i=1}^{d_\vis}\alpha_i \Lambda(\u_i) \otimes \rhoh(i).
    \end{align*}
\end{lemma}
\begin{proof}
    First we prove the forward direction: if $[\rho,\rhov\otimes\id_\hid]=0$ and $\rhov$ has non-degenerate eigenvalues, then $\rho$ is of the form $\rho=\sum_{i=1}^{d_\vis}\alpha_i \Lambda(\u_i) \otimes \rhoh(i)$. Let the spectral decomposition of $\rhov$ be given by $\rhov=\sum_{i=1}^{d_{\vis}}\alpha_i\Lambda(\mathbf u_i)$. Since $[\rho,\rhov\otimes\id_\hid]=0$ and $\rhov$ is non-degenerate, $[\rho,\Lambda(\u_i)\otimes\id_\hid]=0$ for all $i$. This gives 
    \begin{align*}
        \rho(\rhov\otimes\id_\hid)-(\rhov\otimes\id_\hid)\rho=0,\\
        \sum_{i=1}^{d_\vis}\alpha_i \left(\rho(\Lambda(\u_i)\otimes\id_\hid)-(\Lambda(\u_i)\otimes\id_\hid)\rho\right)=0.
    \end{align*}
    Multiplying by $\Lambda(\u_j)\otimes\id_\hid$ on the left and $\Lambda(\u_k)\otimes\id_\hid$ on the right, we obtain
    \begin{align*}
        (\alpha_i-\alpha_j)\left((\Lambda(\u_j)\otimes\id_\hid)\rho(\Lambda(\u_k)\otimes\id_\hid\right)&=0,&\\
        \left((\Lambda(\u_j)\otimes\id_\hid)\rho(\Lambda(\u_k)\otimes\id_\hid\right)&=0,&(\alpha_j\ne \alpha_k)
    \end{align*}
    for all $j\ne k$. Hence,
    \begin{align*}
        \rho &= \sum_{i=1}^{d_\vis}\sum_{j=1}^{d_\vis}(\Lambda(\u_i)\otimes\id_\hid)\rho(\Lambda(\u_j)\otimes\id_\hid),\\
        &=\sum_{i=1}^{d_\vis}(\Lambda(\u_i)\otimes\id_\hid)\rho(\Lambda(\u_i)\otimes\id_\hid).
    \end{align*}
    This is equivalent to
    \begin{align*}
        \rho=\sum_{i=1}^{d_\vis}\alpha_i\Lambda(\u_i)\otimes\rhoh(i).
    \end{align*}
    Since $\rho$ and $\rhov$ are unit trace, $\rhoh(\x_i)$ are unit trace. Similarly, $\rhoh(\x_i)$ is positive semi-definite.

    We now prove the reverse direction: if $\rho$ is of the form $\rho=\sum_{i=1}^{d_\vis}\alpha_i\Lambda(\mathbf{u}_i)\otimes\rhoh(i)$, then $[\rho,\rhov\otimes \id_\hid]=0$. Applying the partial trace to $\rho=\sum_{i=1}^{d_{\vis}} \alpha_i\Lambda(\mathbf{u}_i)\otimes\rhoh(\x_i)$ results in 
    \begin{align*}
    \rhov=\tr_\hid \sum_{i=1}^{d_{\vis}} \alpha_i\Lambda(\mathbf{u}_i)\otimes\rhoh(i)=\sum_{i=1}^{d_{\vis}} \alpha_i\Lambda(\mathbf{u}_i).
    \end{align*}
    Since $[\rhov,\Lambda(\u_i)]=0$ and $[\id_\hid,\rhoh(i)]=0$ for all $i$, we conclude that $[\rho,\rhov\otimes\id_\hid]=0$.
\end{proof}
\dolvms that satisfy the \conds for a classical data set must therefore be classical-quantum states. Furthermore, since the partial trace of the model is required to share the same eigenbasis as the measurement operator, we define the following class of \dolvms.
\begin{definition}[CQ-LVM]
    A \dolvm is a CQ-LVM with respect to a classical data set generated by the measurement $\mathcal X = \sum_{i=1}^{d_\vis} \x_i \,\Lambda(\u_i)$ if it can be expressed as
    \begin{align*}\tag{{\tt CQ-LVM}}\label{CQ}
        \rho(\theta)=\sum_{i=1}^{d_\vis}\Pr(\X\eq\x_i|\theta) \Lambda(\u_i) \otimes \rhoh(i|\theta),
    \end{align*}
    where $\rhoh(i|\theta)$ are in $\den(\hil_\hid)$.
\end{definition}

It is easy to see that all \qlvms satisfy the \conds even if $\rhov$ has degenerate eigenvalues. The special structure of these models allow us to write specialized expressions for quantum log-likelihood and the Petz recovery map.
\begin{lemma}\label{thm:cqelbo}
    Consider a classical data set $\data=\{\x\iter{1},\dots,\x\iter{N}\}$ from projective measurements with $\mathcal X = \sum_{i=1}^{d_\vis} \x_i \,\Lambda(\u_i)$ and a \ref{CQ} $\rho(\theta)=\sum_{i=1}^{d_\vis}P(\X\eq\x_i|\theta)\Lambda(\u_i)\otimes\rhoh(\x_i|\theta)$. The quantum log-likelihood of this model is given by $\Lu(\etav,\theta)=\frac{1}{N}\sum_{i=1}^N \log P(\X=\x\iter{i}|\theta)$. The Petz Recovery Map for $\etav$ with respect to $\rho(\theta\iter{\textrm{old}})$ is 
    \begin{align*}
        \mathcal R_{\rho,\tr_\hid}(\etav) = \frac{1}{N}\sum_{i=1}^N \Lambda(\v\iter{i})\otimes \rhoh(\x\iter{i}|\theta\iter{\textrm{old}}).
    \end{align*}
\end{lemma}
\begin{proof}
    As $\rho(\theta)$ is a \ref{CQ} with respect to the basis $\B$, it satisfies the \conds and $[\etav,\rhov(\theta)]=0$. Since the matrices commute and can be simultaneously diagonalized, $\Lu(\etav,\theta)=\tr(\etav\log\rhov(\theta))$ reduces to a function of the eigenvalues of the matrices,
    \begin{align*}
        \Lu(\etav,\theta)=\sum_{i=1}^N \log P(\X\eq\x\iter{i}|\theta).
    \end{align*}
    Since the \conds are satisfied, the Petz Recovery Map can be computed using commutativity as follows
    \begin{align*}
        \mathcal R_{\rho,\tr_\hid}(\etav)&=\rho^{-1/2}\left(\left(\tr_B(\rho)^{-1/2}\etav\tr_B(\rho)^{-1/2}\right)\otimes\id_B\right)\rho^{-1/2},&\\
        &=\rho(\rhov^{-1}\otimes\id_\hid)(\etav\otimes\id_\hid),&\text{(commutativity)}\\
        &=\left(\sum_{i=1}^{d_\vis}\Lambda(\u_i)\otimes\rhoh(\x_i)\right)(\etav\otimes\id_\hid),&\text{(spectral theorem)}\\
        &=\left(\sum_{i=1}^{d_\vis}\Lambda(\u_i)\otimes\rhoh(\x_i)\right)\left(\sum_{i=1}^N\Lambda(\v\iter{i})\otimes\id_\hid\right),&\text{(spectral theorem)}\\
        &=\frac{1}{N}\sum_{i=1}^N \Lambda(\v\iter{i})\otimes \rhoh(\x\iter{i}).&\text{(distributivity)}
    \end{align*}
\end{proof}
Since all data points are results of the same projective measurement, the log-likelihood and the Petz Recovery Map can be calculated for each data point. Using an analogous result to the chain rule of KL divergence for \qlvms proved as \cref{thm:crure} in Appendix \ref{app:crure}, we decompose the \ref{QELBO} into smaller terms corresponding to each data point.
\begin{theorem}[Decomposition of CQ-ELBO]
     For a classical data set and a corresponding \ref{CQ} $\rho(\theta)$, the minorant function can be decomposed as $\mathcal{Q}(\theta;\theta\iter{\textrm{old}})=\frac{1}{N}\sum_{i=1}^N \mathcal{Q}_i(\theta;\theta\iter{\textrm{old}})$ where
        \begin{align*}
        \mathcal{Q}_i(\theta;\theta\iter{\textrm{old}}) =  \tr\left(\rhoh(\x\iter{i}\mid\theta\iter{\textrm{old}})\log \left(\Pr(\X\eq\x\iter{i}\mid\theta)\rhoh(\x\iter{i}\mid\theta)\right)\right) + \svn\left(\rhoh(\x\iter{i}\mid\theta\iter{\textrm{old}})\right).
    \end{align*}
\end{theorem}
\begin{proof}
        Plugging the Petz Recovery Map into \cref{eq:minorant} allows the following simplification
    \begin{align*}
        \mathcal{Q}(\theta,\theta\iter{\textrm{old}})&=\mathrm{QELBO}(\eta{(\theta^{\mathrm{(old)}})},\rho(\theta)),&\\
        &=-\du(\eta(\theta^{\mathrm{(old)}}),\rho(\theta))-\svn(\etav),&\text{(definition)}\\
        &= -\du(\etav,\rhov(\theta))-\frac{1}{N}\sum_{i=1}^{N}\du(\rhoh(\theta^{\mathrm{(old)}}),\rhoh(\theta))-\svn(\etav),&\text{(\cref{thm:crure})}\\
        &= \tr(\etav\log\rhov(\theta))-\frac{1}{N}\sum_{i=1}^{N}\du(\rhoh(\theta^{\mathrm{(old)}}),\rhoh(\theta)),&\text{(definition)}\\
        &= \frac{1}{N}\sum_{i=1}^N \log \Pr(\x\iter{i}\mid\theta) -\frac{1}{N}\sum_{i=1}^{N}\du(\rhoh(\theta^{\mathrm{(old)}}),\rhoh(\theta))&\text{(commutativity)}\\
        &= \frac{1}{N}\sum_{i=1}^N \mathcal{Q}_i(\theta,\theta\iter{\textrm{old}})
    \end{align*}
    as required.
\end{proof}
It is straightforward to verify that $\mathcal{Q}_i(\theta,\theta\iter{\textrm{old}})$ is a minorant of $\log \Pr(\X=\x\iter{i}|\theta)$. Moreover, if the $\rhoh(i|\theta)$ were diagonal, this expression reduces to the classical \ref{ELBO}. \cref{thm:cqelbo} allows us to specialize \cref{alg:doem} to \qlvms as a mirror of the classical EM algorithm.

\begin{algorithm}[H]
   \caption{\doem for \qlvm}
   \label{alg:cqdoem}
\begin{algorithmic}[1]
   \STATE {\bfseries Input:} $\data=\{\v\iter{1},\dots,\v\iter{N}\}$ and $\theta^{({0})}$
   \WHILE{not converged}
   \FOR{$i=1$ to $N$}
   \STATE $\mathcal{Q}_i(\theta;\theta^{(t)})=\tr\left(\rhoh(\x\iter{i}\mid\theta\iter{t})\log P(\x\iter{i}\mid\theta)\rhoh(\x\iter{i}\mid\theta)\right) + \svn(\rhoh(\x\iter{i}\mid\theta\iter{t}))$
   \ENDFOR
   \STATE $\theta^{(t+1)}=\argmax_\theta \frac{1}{N}\sum_{i=1}^N \mathcal{Q}_i(\theta;\theta^{(t)})$
   \ENDWHILE
\end{algorithmic}
\end{algorithm}
\noindent
\cref{alg:cqdoem} allows us to iterate through the dataset without any dependency on the size of $\etav$ as the measurement basis is fixed. If the model $\rho(\theta)$ can be tractably computed, \doem allows for the tractable learning of \qlvms. The inequality in Lemma \ref{thm:learning} is saturated for \qlvms. Hence, the \doem algorithm provably trains both the quantum and projective log-likelihoods. We now show that \qlvms are a broad class of models that include several Hamiltonian-based models.

\subsection{CQ Hamiltonian-based models and Contrastive Divergence}
In this section, we develop Hamiltonian-based \qlvms and techniques to train them at scale. We assume that the data $\mathcal  D=\{\mathrm x\iter{1},\dots,\mathrm x\iter{N}\}$ is sampled from the set $S=\{+1,-1\}^m$. The corresponding Hilbert space $\hil_\vis$ then has dimension $2^m$ with projective measurements along the standard basis. We begin by deriving a condition on the Hamiltonian that must be satisfied for a Hamiltonian-based model to  be a \qlvm.

\begin{lemma}\label{thm:taylor}
    A Hamiltonian-based model $\rho(\theta)=e^{\H(\theta)}/Z(\theta)$ with $\H(\theta)=\sum_r\theta_r\H_r$ is a \ref{CQ} with respect to the basis $\{\e_i\}_{i=1}^{2^m}$ if and only if $\H = \oplus_i \H_{i}$ where $\H_{i}$ are Hermitian operators in $\map(\hil_\hid)$.
\end{lemma}
\begin{proof}Note that a \ref{CQ} with respect to the basis $\{\e_i\}_{i=1}^{2^m}$ can be expressed as
\begin{align}\label{eq:oplus}
    \rho = \sum_{i=1}^{2^m} \Pr(\X\eq\x_i)\Lambda(\mathbf{e}_i)\otimes\rhoh(\x_i) = \oplus_{i=1}^{2^m}\Pr(\x_i)\rhoh(\x_i).
\end{align}

    We first prove that a Hamiltonian of the form $\H = \oplus_i \H_{i}$ leads to a model of the form \cref{eq:oplus}. The Taylor series of the matrix exponential is given by $e^X = \sum_{k=0}^\infty \frac{1}{k!}X^k$. Since $\H$ is a block diagonal matrix, the powers of $\H$ are also block diagonal. Therefore $\rho$ is also a block diagonal matrix as required. 

    We now prove that a model of the form \cref{eq:oplus} arises from a Hamiltonian of the form $\H = \oplus_i \H_{i}$. The Taylor series of the matrix logarithm is given by $\log X = - \sum_{k=1}^\infty \frac{1}{k}(I-X)^k$. If $(\log Z(\theta))\rho=e^{\H(\theta)}$ is a block diagonal matrix, then its logarithm $\H(\theta)$ is also a block diagonal matrix as required. 
\end{proof}
We now recall QBMs from \cref{sec:qbm}.
This block diagonal structure of the model allows the specialization of \qbms to \qlvms.
\begin{corollary}\label{thm:er}
    A $\qbm$ with $m$ visible and $n$ hidden units given by the Hamiltonian
    \begin{equation*}
    \H(\theta)=\left\{\begin{aligned}
        &-\sum_{i=1}^{m}\mathbf a_i\sz_{i}-\sum_{i=1}^{n}\mathbf b_i\sz_{m+i}-\sum_{i=1}^{m+n}\Gamma_i\sx_i\\
    &-\sum_{i=1}^m\sum_{j=1}^n\mathbf W_{ij}\sz_i\sz_j-\sum_{i=1}^m\sum_{j=1}^m\mathbf W^{(\vis)}_{ij}\sz_i\sz_j-\sum_{i=1}^n\sum_{j=1}^n\mathbf W^{(\hid)}_{ij}\sz_{m+i}\sz_{m+j}
    \end{aligned}\right.
\end{equation*}
    is a \qlvm if and only if quantum terms on the visible units are zero.
\end{corollary}
\begin{proof}
\cref{thm:taylor} implies that the Hamiltonian is \class if and only if it is a direct sum of $2^m$ matrices of size $2^n\times 2^n$. The only off-diagonal entries of the Hamiltonian are from the quantum bias terms $\Gamma_i$. Recall that $\sx_i$ are defined as
\begin{equation*}
    \sigma^k_i=\otimes_{j=1}^{i-1}\id\otimes \sigma^k\otimes_{j=i+1}^{m+n}\id
\end{equation*}
where $\sx=\begin{psmallmatrix}0&1\\1&0\end{psmallmatrix}$ and $\id=\begin{psmallmatrix}1&0\\0&1\end{psmallmatrix}$. It is easy to see that $\sx_i=\oplus_{j=1}^{i}\left(\sigma^k\otimes_{j=i+1}^{m+n}\id\right)$ for $i>m$ which satisfies the condition in \cref{thm:taylor}.

To prove the converse, we observe that the Kronecker product $\sx\otimes M=\begin{psmallmatrix}0&M\\M&0\end{psmallmatrix}$ for any square matrix $M$. When $i\leq m$, this results in matrices larger than $2^n\times 2^n$ with non-zero entries which cannot then be represented as a direct sum of $2^n\times 2^n$ matrices. Hence, $\Gamma_i=0$ if $i\leq m$.
\end{proof}
While, we showcase one set of Hamiltonians that are \qlvms, several other Hamiltonians leading to \qlvms can be constructed. Notably, the class of semi-quantum RBMs studied in \citet{demidik2025} are \qlvms. While the Hamiltonians of \qlvms have additional structure, training such models using real-world data is intractable since the free energy term, $-\log Z(\theta)$, cannot be easily computed even for classical Boltzmann machines. 

In Boltzmann machines, this issue is typically side-stepped by the Contrastive Divergence algorithm discussed in \cref{sec:lvm}. To achieve tractable training of \qbms, we adapt the CD algorithm to the Hamiltonian of a Quantum RBM ({\tt QRBM}) with transverse field only in the hidden units given by
\begin{align}\tag{{\tt QRBM}}\label{QRBM}
    \H(\theta) = -\sum_{i=1}^{m}\mathbf a_i\sz_i-\sum_{i=1}^{n}\mathbf b_i\sz_{m+i} -\sum_{i=1}^{m}\sum_{j=1}^{n}\mathbf W_{ij}\sz_i\sz_{m+j} -\sum_{i=1}^{n}\Gamma_i\sz_{m+i}
\end{align}
where $\mathbf a$ in $\R^m$ and $\mathbf b$ in $\R^n$ are the usual bias terms, $\mathbf W$ in $\R^{m\times n}$ is the weight matrix between the two layers, and $\Gamma$ in $\R^n$ is the transverse bias term. If the non-quantum visible layer is fixed to $\x$, with corresponding Hilbert space vector $\v$, the hidden units of the hidden layer are conditionally independent. The Hamiltonian of the $j^{\mathrm{th}}$ unit of the hidden layer $\mathrm L$ is given by 
\begin{align*}
    \H_{\hid}(j|\x,\theta)=-\mathbf b_j^{\mathrm{eff}}\sz-\Gamma_j\sx
\end{align*}
with $\mathbf b_j^{\mathrm{eff}}=\mathbf b_j+\sum_{i=1}^m \mathbf W_{ij}\x_i$. The Hamiltonian decomposes into $n$ independent $2\times 2$ matrices and allows for the tractable sampling from the quantum layer using the expected values
\begin{equation}\label{eq:gibbs}
\begin{aligned}
        \braket{\sz_j}_{\mathbf{v}}=\frac{\mathbf b_j^{\mathrm{eff}}}{D_j}\tanh{D_j}\ \text{and}\ 
        \braket{\sx_j}_{\mathbf{v}}=\frac{\Gamma_j}{D_j}\tanh{D_j}
\end{aligned}
\end{equation}
where $D_j=\sqrt{(\mathbf b_j^{\mathrm{eff}})^2+\Gamma_j^2}$. The Gibbs step for the non-quantum layers can be performed as per the classical CD algorithm using the quantum sample along the $Z$-basis. This closed-form expression for Gibbs sampling without matrices allows CD to run on a {\tt QRBM} with the same memory footprint as an RBM. This scheme can easily be generalized to the class of models considered in \citep{demidik2025}. We provide the justification for the sampling step stated here in Appendix \ref{app:sampling}.

\subsection{Novel QBM Variants}
In this section, we extend the notion of quantum RBMs to Deep Boltzmann Machines and Gaussian-Bernoulli RBMs introducded in \cref{sec:lvm} and adapt the Gibbs sampling step described above to these new models.

\subsubsection{Quantum Interleaved DBM}
A \textbf{Q}uantum \textbf Interleaved \textbf Deep \textbf Boltzmann \textbf Machine (\qidbm) is a Deep Boltzmann Machine with quantum bias terms on \textbf{non-contiguous hidden layers}. It can be viewed as a set of stacked classical-quantum RBMs. We describe the Hamiltonian of a three-layered $\qidbm$ with $\ell$ visible units and $m$ and $n$ hidden units respectively in the two hidden layers. 
For ease of presentation, the quantum bias terms are present in the middle layer.
\begin{equation*}\tag{{\tt QiDBM}}\label{QiDBM}
    \H(\theta)=\left\{\begin{aligned}[c]&-\sum_{i=1}^m \mathbf a_i\sz_i-\sum_{i=1}^{m}\mathbf b^{(1)}_i\sz_{\ell+i}-\sum_{i=1}^{n}\mathbf b^{(2)}_i\sz_{\ell+m+i}-\sum_{{i=1}}^{m}\Gamma_i\sx_{\ell+i}\\
    &-\sum_{i=1}^\ell\sum_{j=1}^{m} \mathbf W^{(1)}_{ij}\sz_i\sz_{\ell+j}-\sum_{i=1}^{m}\sum_{j=1}^{n} \mathbf W^{(2)}_{ij}\sz_{\ell+i}\sz_{\ell+m+j}
\end{aligned}\right.
\end{equation*}
 We illustrate the case of the middle layer of $\qidbm_{\ell,m,n}$. If the non-quantum visible and hidden layers are fixed to $\x$ and $\z_{[2]}$, with corresponding Hilbert space vectors $\mathbf{v}$ and $\mathbf{h}^{(2)}$, the hidden units of the quantum layer are conditionally independent. The Hamiltonian for the $j^{\mathrm{th}}$ unit of the quantum layer is given by $\H^{{(1)}}(j|\x,\z_{[2]},\theta)=-\mathbf b_j^{\mathrm{eff}}\sz-\Gamma_i\sx$.
We sample from the quantum layer using the expected values
\begin{equation*}
\begin{aligned}
        \braket{\sz_j}_{\x,\z_{[2]}}=\frac{\mathbf b_j^{\mathrm{eff}}}{D_j}\tanh{D_j}\ \text{and}\ 
        \braket{\sx_j}_{\x,\z_{[2]}}=\frac{\Gamma_j}{D_j}\tanh{D_j}
\end{aligned}
\end{equation*}
where $D_j=\sqrt{(\mathbf b_j^{\mathrm{eff}})^2+\Gamma_j^2}$ and $b_j^{\mathrm{eff}}=b_j+\sum_{i=1}^\ell \mathbf W^{(1)}_{ij}\x_i+\sum_{i}^n \mathbf W^{(2)}_{ji}\z_{[2],i}$. The Gibbs step for the non-quantum layers is done as per the classical CD algorithm using the quantum sample from the $Z$-basis.

\subsubsection{Quantum Gaussian-Bernoulli RBM}
A \textbf{Q}uantum \textbf{G}aussian-Bernoulli \textbf{R}estricted \textbf Boltzmann \textbf Machine ({\tt QGRBM}) is a \ref{eq:GRBM} modified to include the transverse field over the Hidden units. Since the visible data is sampled from $\mathbb R$, the visible Hilbert space $\hil_\vis$ should be infinite-dimensional. All separable infinite-dimensional Hilbert spaces over $\mathbb C$ has dimension equal to the cardinality of the real numbers \citep{Conway1990}. We fix \(\hil_\vis\) as the space of square-summable complex sequences and associate each real number \(\x\) with a corresponding basis vector \(\mathbf{x}\) in an orthonormal basis of \(\hil_\vis\). 

The Hamiltonian of the finite-dimensional space $\hil_\hid$ with respect to $\x$ in $\mathbb R^m$ is 
\begin{equation*}\tag{{\tt QGRBM}}\label{QGRBM}
    \H(\mathrm x,\theta)=-\sum_{i=1}^m \frac{(\x_i-\mathbf a_i)^2}{\mathbf s_i^2}\id_\hid -\sum_{j=1}^n \mathbf{b}^{\mathrm{eff}}_j \sz_j - \sum_{j=1}^n \Gamma_j \sx_j,
\end{equation*}
where $\mathbf{b}^{\mathrm{eff}}_j=\mathbf{b}_j+\sum_{i=1}^m \mathbf W_{ij}\frac{\x_i}{s_i}$. The model density operator $\rho$ is then obtained by integrating over the visible space
\begin{align*}
    \rho(\theta)&=\frac{1}{\mathcal{Z}(\theta)}\int_{\mathbf x} \Lambda(\mathbf x)\otimes \exp(\H(\x,\theta))\mathrm d\mathbf x,\\
    \mathcal{Z}(\theta)&=\int_{-\infty}^{+\infty} \tr\exp(\H(\x,\theta))\mathrm{d}\x.
\end{align*}
The probability density function of the visible units is given by $\Pr(\x|\theta)=\frac{\tr\exp(\H(\x,\theta))}{\mathcal{Z}(\theta)}$. If the visible layer is fixed to $\x$, the Hamiltonian for the the $j^{\mathrm{th}}$ unit of the hidden layer $\mathrm L$ is given by $\H_{\hid}(j|\x,\z_{[2]},\theta)=-\mathbf b_j^{\mathrm{eff}}\sz-\Gamma_i\sx$. Much like in the {\tt QRBM} and {\tt QiDBM} models, sampling over these qubits remains efficient, allowing CD to approximate the gradients. The visible units can are drawn from the Gaussian distribution defined in \cref{sec:lvm}, using the hidden-qubit samples obtained in the $Z$-basis.

\section{Experiments}
\label{sec:exp}

In this work, we introduced Classical-Quantum LVMs---a class of \dolvms---and a general EM framework, \doem, for learning them. In this section, we empirically evaluate our methods with experiments designed to answer the following questions.

\begin{questions}[leftmargin=*,label=({Q}{{\arabic*}})]
    \item\label{Q:PRAC}\textbf{Effectiveness of DO-EM.} Do \qlvms trained with \cref{alg:doem} provide any computational advantage over gradient-based approaches in training \dolvms?\item\label{Q:SCALE}\textbf{Scalability of DO-EM.} Can \qlvms trained with \cref{alg:cqdoem} scale to standard ML data sets such as MNIST?
    \item\label{Q:PERF}\textbf{Performance of DO-EM.} Do \qlvms trained with \cref{alg:cqdoem} provide a measurable improvement over probabilistic LVMs on standard ML data sets?
\end{questions}
To address these questions, we train variants of the \qbm proposed in this work and compare them against benchmarks reproduced using reference implementations from literature. Code for all experiments is available at \href{https://github.com/aditvishnu/DO-EM}{{\tt https://github.com/aditvishnu/DO-EM}}. Details of the computing infrastructure used for all experiments are provided in Appendix \ref{app:hardware}. 

\subsection{DO-EM vs Gradient-based Maximization of Projective Log-Likelihood}

We conduct experiments using \emph{exact log-likelihood computation} to compare \doem with gradient-based maximization of the projective log-likelihood. A \qbm satisfying the \conds and trained using \cref{alg:doem} is compared against a semi-restricted \qbm trained via gradient-based maximization of projective log-likelihood (\ref{PL}), following the approach of \citet{amin2018}.

The \qbms are trained on a mixture of Bernoulli data set introduced in \citet{amin2018}. A Bernoulli distribution is built on a randomly selected mode and the average probability distribution over $M$ such modes determines the mixture of Bernoulli distribution. The distribution can be written as
\begin{align*}
P_v^{\mathrm{data}}=\frac1M\sum_{k=1}^M p^{m-d^k_v}(1-p)^{d^k_v}\label{eq:mbd}
\end{align*}
where $m$ is the dimension of data, $p$ is the probability of success, and $d^k_v$ is the Hamming distance between $v$ and the randomly selected mode. 
We use 100 randomly sampled data sets of 1000 samples each with $N=1000$, $M=8$, $m=8$, and $p=0.9$. A model with $m=8$ visible units and $n=2$ hidden units with random initialization is trained on all 100 datasets with a learning rate of $0.1$, and we plot the resulting average log-likelihood in \cref{fig:prac}. For the semi-restricted QBM, where the Projective and Umegaki log-likelihoods differ, we show both curves in the plot.

\cref{alg:doem} executes a pair of EM steps in only \emph{0.2 seconds}, while direct gradient ascent on the projective log-likelihood requires approximately \emph{40 seconds} per step. Thus, \doem offers a significant speed-up for training \qbms in the exact computation setting. This efficiency can be further enhanced by employing \cref{alg:cqdoem} to train the \qbm. As shown in \cref{fig:prac}, the model trained with \doem \emph{converges in roughly half the number of steps} compared to direct gradient ascent while reaching \emph{comparable log-likelihood values}. These findings position \doem as a strong candidate for \qbm training, offering clear empirical gains in computational resources while maintaining similar performance.

\begin{figure}[t]
    \begin{subfigure}[t]{0.32\linewidth}
        \centering
        \includegraphics[width=\linewidth]{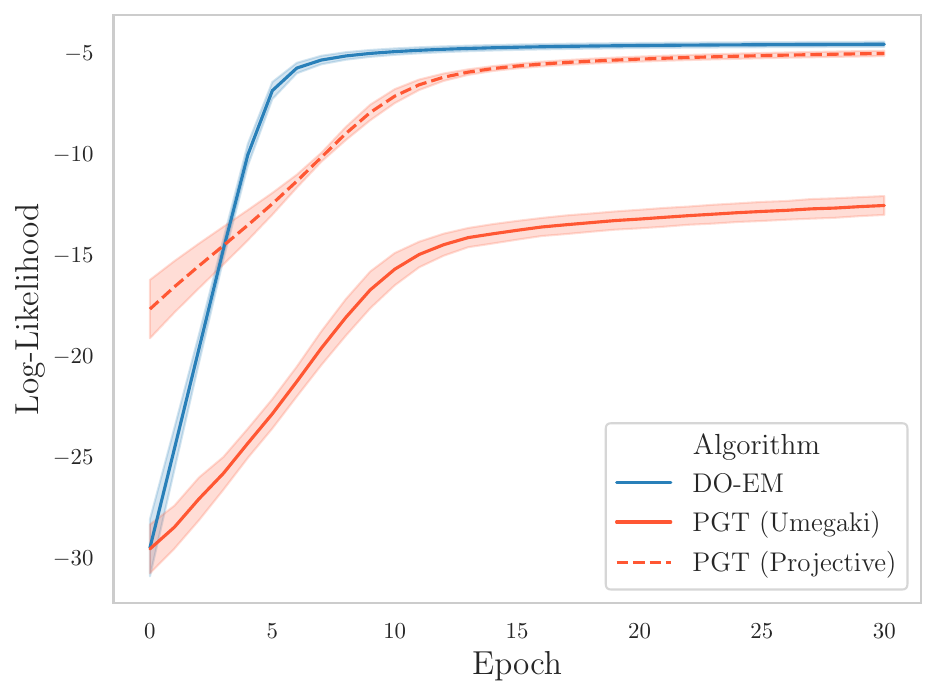}
        \caption{Log-likelihood during training with exact computation showing \doem achieving comparable results faster.}
        \label{fig:prac}
    \end{subfigure}
    \hfill
    \begin{subfigure}[t]{0.32\linewidth}
        \centering
        \includegraphics[width=\linewidth]{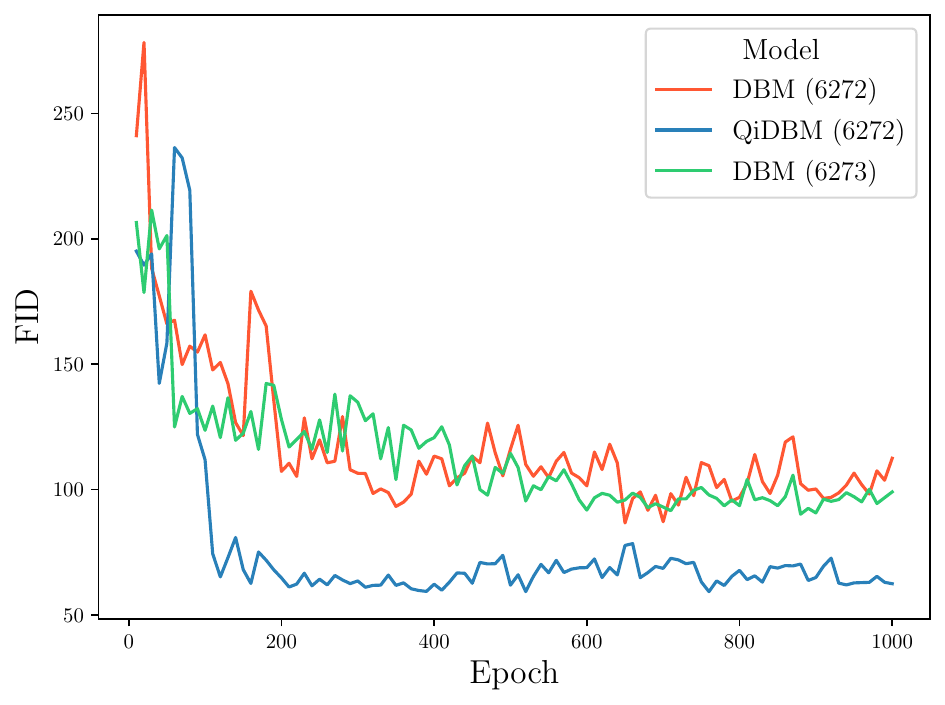}
        \caption{FID on MNIST on a QiDBM (6272 units per hidden layer) and DBMs (6272 and 6273 units per hidden layer).}
        \label{fig:scale}
    \end{subfigure}
    \hfill
    \begin{subfigure}[t]{0.32\linewidth}
        \centering
        \includegraphics[width=\linewidth]{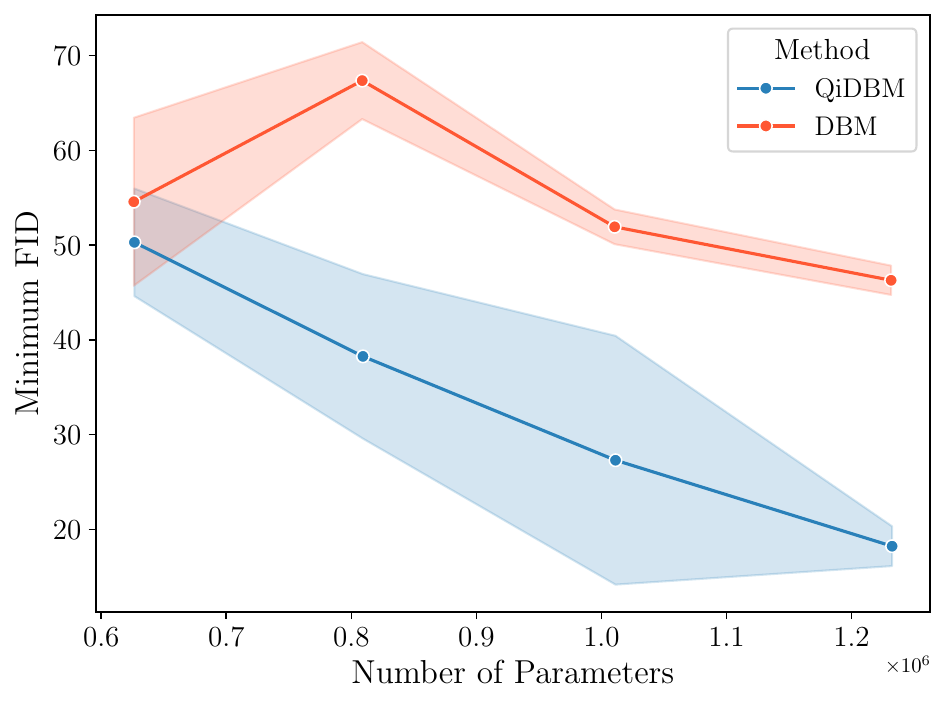}
        \caption{FID scores on Binarized MNIST as a function of model parameters for QiDBM and DBM (averaged over 3 seeds).}
        \label{fig:perf}
    \end{subfigure}
    \caption{(a) \doem vs Proejctive log-likelihood gradient-based training (PGT) with exact computation on mixture of Bernoulli data set. (b,c) QiDBM vs DBM on MNIST with CD.}
    \label{fig:combined}
\end{figure}

\subsection{\qlvms on Discrete Data}

To showcase the scalability and performance of \qlvms using \doem, we train Quantum interleaved Deep Boltzmann Machines (\ref{QiDBM}) using contrastive divergence to approximate the M-step gradient. We compare our method with \citet{taniguchi23}, the state of the art for training DBMs. Since we were unable to reproduce their reported results, we use results obtained from their official implementation\footnote{\url{https://github.com/iShohei220/unbiased_dbm}} with the hyperparameters recommended in their paper. \emph{No additional hyperparameter tuning} is performed on the quantum variant.

Following \citet{taniguchi23}, we use a \(\{0,1\}\) encoding instead of \(\{-1,+1\}\) to perform experiments on MNIST and Binarized MNIST \citep{deng2012}. The data sets contain 60,000 training and 10,000 testing images of size 28$\times$28. We evaluate the quality of generated images using the Fr\'{e}chet Inception Distance \citep[FID,][]{seitzer2020}, computed between 10,000 generated samples and the MNIST test set.

\begin{figure*}[t]
    \centering

    \begin{subfigure}[t]{0.47\textwidth}
        \includegraphics[width=\linewidth, trim=0 0 0 182, clip]{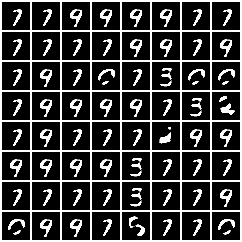}
        \caption{DBM at 175 epochs}
    \end{subfigure}
    \begin{subfigure}[t]{0.47\textwidth}
        \includegraphics[width=\linewidth, trim=0 0 0 182, clip]{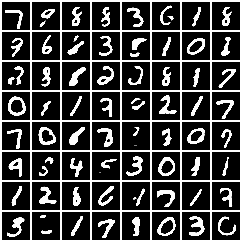}
        \caption{QiDBM at 175 epochs}
    \end{subfigure}
    \vspace{1em}

    \begin{subfigure}[t]{0.47\textwidth}
        \includegraphics[width=\linewidth, trim=0 0 0 182, clip]{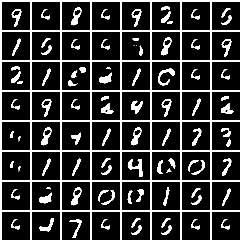}
        \caption{DBM at 206 epochs}
    \end{subfigure}
    \begin{subfigure}[t]{0.47\textwidth}
        \includegraphics[width=\linewidth, trim=0 0 0 182, clip]{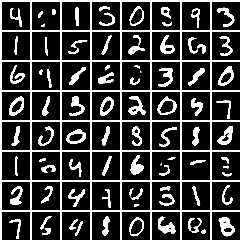}
        \caption{QiDBM at 206 epochs}
    \end{subfigure}
    \vspace{1em}
    
    \begin{subfigure}[t]{0.47\textwidth}
        \includegraphics[width=\linewidth, trim=0 0 0 182, clip]{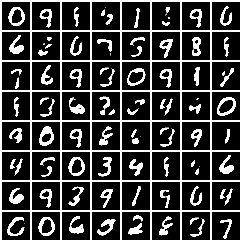}
        \caption{DBM at 337 epochs}
    \end{subfigure}
    \begin{subfigure}[t]{0.47\textwidth}
        \includegraphics[width=\linewidth, trim=0 0 0 182, clip]{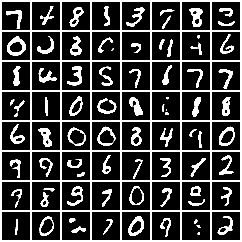}
        \caption{QiDBM at 337 epochs}
    \end{subfigure}
    \vspace{1em}
    
    \begin{subfigure}[t]{0.47\textwidth}
        \includegraphics[width=\linewidth, trim=0 0 0 182, clip]{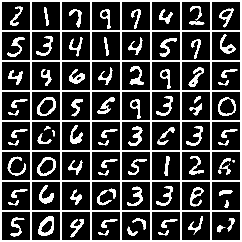}
        \caption{DBM at 679 Epochs}
    \end{subfigure}
    \begin{subfigure}[t]{0.47\textwidth}
        \includegraphics[width=\linewidth, trim=0 0 0 182, clip]{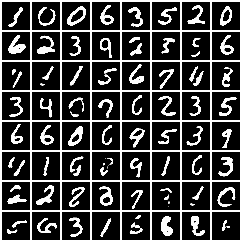}
        \caption{QiDBM at 679 Epochs}
    \end{subfigure}

   \caption{Generated samples during training: DBM (left)  QiDBM (right).}
    \label{fig:gen}
\end{figure*}
\subsubsection{MNIST}

To demonstrate the scalability of \qlvms on standard ML data sets under the \doem scheme, we train {\tt Qi-DBMs} on the MNIST dataset. Following \citet{taniguchi23}, each 8-bit image is mapped to 6272 binary visible units. We compare a three-layer \ref{QiDBM}---with quantum-bias terms in the middle layer and 6272 units per hidden layer---against two three-layer DBMs: one with 6272 units per hidden layer and another with 6273 units per hidden layer. The \ref{QiDBM} has 78.70M parameters while the DBMs have 78.69M and 78.71M parameters respectively. Training is carried out for 1000 epochs using stochastic gradient descent with Contrastive Divergence, a batch size of 600, and a learning rate of 0.001.

The FID values of generated samples, recorded every 10 epochs, are plotted in \cref{fig:scale}. The proposed method consistently outperforms both classical models of comparable size, achieving a \emph{45\% reduction in FID}. The \ref{QiDBM} \emph{converges within 400 epochs}, whereas both DBMs remain unstable beyond 500 epochs. The final FID scores are 62.77 for the \ref{QiDBM}, compared to 111.73 and 99.17 for the two DBMs.

In \cref{fig:gen}, we present qualitative comparisons of generated samples at randomly selected checkpoints during training. Notably, \ref{QiDBM} samples appear \emph{more coherent than those from DBMs} as early as 250 epochs. These findings show that scaling {\tt Qi-DBMs} to data sets such as MNIST is feasible and leads to \emph{substantial performance gains}.

\subsubsection{Binarized MNIST}

To further assess the impact of the additional quantum bias terms, we compare {\tt Qi-DBMs} and DBMs across different model sizes. \ref{QiDBM} and DBM models with two hidden layers are trained on 1-bit MNIST, with 490, 588, 686, and 784 units in each layer. Training is carried out for 100 epochs using stochastic gradient descent with Contrastive Divergence, a batch size of 600, and a learning rate of 0.001. Each setting is repeated over three random seeds to account for variability.

The minimum FID over 100 epochs against the number of parameters averaged over the three repeats is graphed in \cref{fig:perf}. Models with quantum bias consistently outperform their classical counterparts, achieving \emph{similar FID scores with significantly fewer parameters}. {\tt QiDBMs} outperforms the DBMs in all configurations achieving a minimum FID of 14.77 compared to the DBM’s 42.61, \emph{a 65\% improvement in generation quality}.

\subsection{\qlvms on Continuous Data}

To assess the applicability of the \doem algorithm to continuous data, we apply \cref{alg:cqdoem} and Contrastive Divergence to Quantum Gaussian-Bernoulli RBMs (\ref{QGRBM}). We compare our method with \citet{liao2023gaussianbernoulli}. We use results obtained from their official implementation\footnote{\url{https://github.com/lrjconan/GRBM}} with the hyperparameters recommended in their paper. \emph{No additional hyperparameter tuning} is performed on the quantum variant.

Following \citet{liao2023gaussianbernoulli}, we use a \(\{0,1\}\) encoding in the hidden units instead of \(\{-1,+1\}\) to perform experiments on Fashion MNIST \citep{xiao2017fashionmnist} and CelebA \citep{liu2015faceattributes}. The Fashion MNIST data set contains 60,000 training and 10,000 testing images of size 28$\times$28. The CelebA-32 data set contains 162,770 training and 19,962 testing images downsampled to size 32$\times$32. We assess the quality of the generated images using the Fréchet Inception Distance \citep[FID,][]{seitzer2020}, computed between generated samples and the test set with the same number of images generated as size of test set.

\begin{figure}[h]
    \centering
    \begin{subfigure}[t]{0.32\linewidth}
        \centering
        \includegraphics[width=\linewidth]{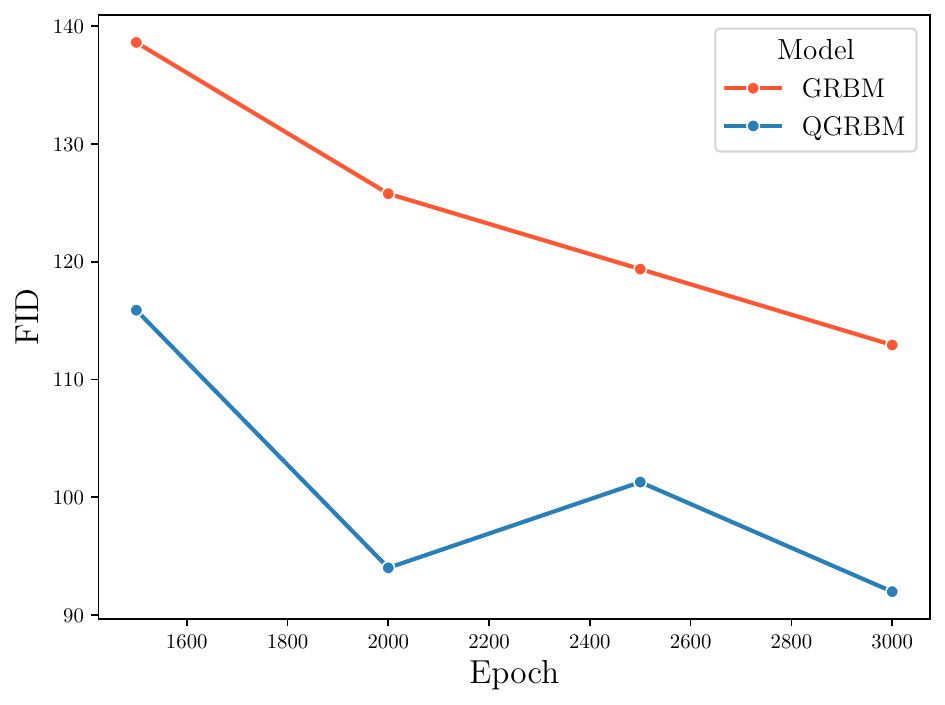}
        \caption{FID evolution of GRBM and QGRBM during training}
        \label{fig:fmnist_fid}
    \end{subfigure}
    \hfill
    \begin{subfigure}[t]{0.32\linewidth}
        \centering
        \includegraphics[width=\linewidth]{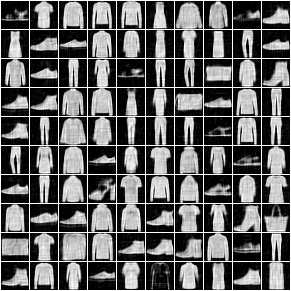}
        \caption{Samples generated by the QGRBM at Epoch 3000}
        \label{fig:fmnist_qgrbm}
    \end{subfigure}
    \hfill
    \begin{subfigure}[t]{0.32\linewidth}
        \centering
        \includegraphics[width=\linewidth]{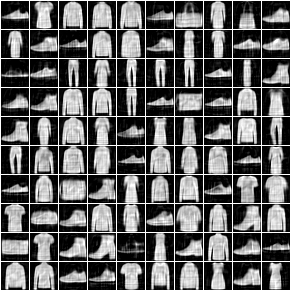}
        \caption{Samples generated by the GRBM at Epoch 3000}
        \label{fig:fmnist_grbm}
    \end{subfigure}
    \caption{Fashion MNIST}
    \label{fig:fmnist_combined}
\end{figure}

\subsubsection{Fashion MNIST}

We train a \ref{QGRBM} with 10,000 hidden units on the Fashion-MNIST dataset and compare it to a GRBM trained with Contrastive Divergence under the same conditions following \citet{liao2023gaussianbernoulli}. Training is carried out for 3000 epochs using stochastic gradient with Contrastive Divergence, a batch size of 512, and a learning rate of 0.01. The FID values of generated samples are plotted in \cref{fig:fmnist_fid}. The \ref{QGRBM} achieves \emph{a 20\% reduction in FID}. An illustrative comparison of generated samples is provided in \cref{fig:fmnist_qgrbm,fig:fmnist_grbm}, where we observe noticeably \emph{higher sample diversity} in \ref{QGRBM} relative to GRBM.

\begin{figure}[h]
    \centering
    \begin{subfigure}[t]{0.32\linewidth}
        \centering
        \includegraphics[width=\linewidth]{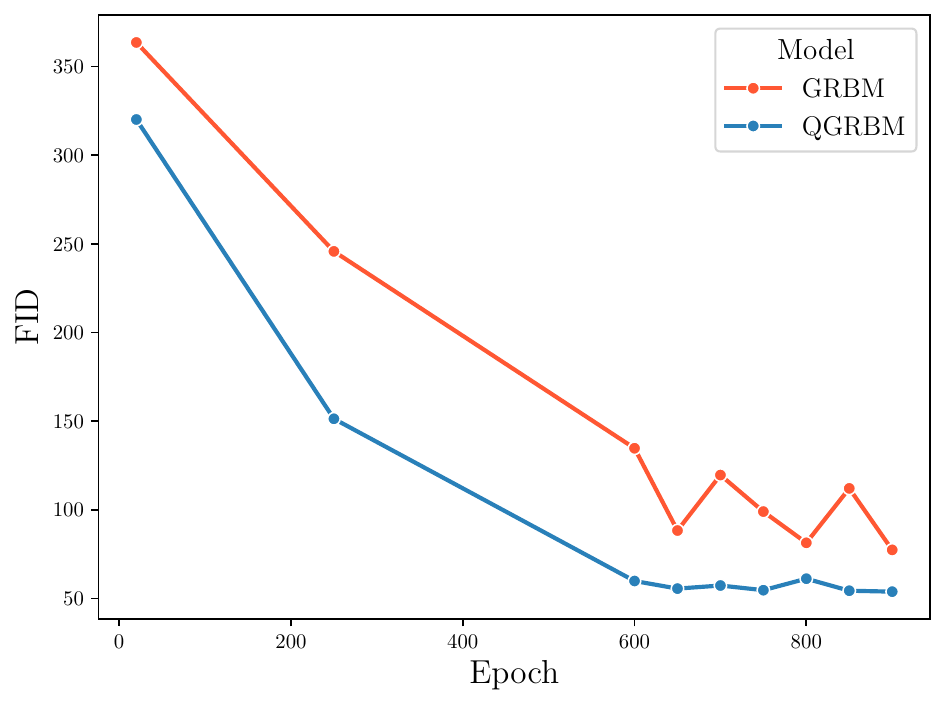}
        \caption{FID evolution of GRBM and QGRBM during training}
        \label{fig:celeba_fid}
    \end{subfigure}
    \hfill
    \begin{subfigure}[t]{0.32\linewidth}
        \centering
        \includegraphics[width=\linewidth]{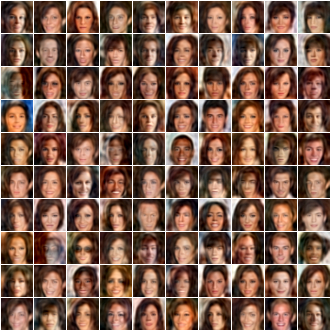}
        \caption{Samples generated by the QGRBM at Epoch 900}
        \label{fig:celeba_qgrbm}
    \end{subfigure}
    \hfill
    \begin{subfigure}[t]{0.32\linewidth}
        \centering
        \includegraphics[width=\linewidth]{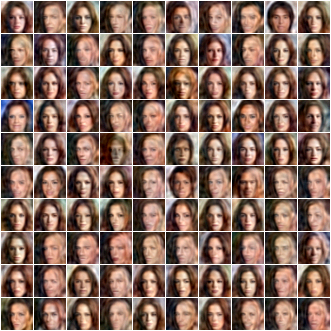}
        \caption{Samples generated by the GRBM at Epoch 900}
        \label{fig:celeba_grbm}
    \end{subfigure}
    \caption{CelebA-32}
    \label{fig:celeba_combined}
\end{figure}

\subsubsection{CelebA-32}

We train a \ref{QGRBM} on the Fashion-MNIST dataset and compare it to a GRBM trained with Contrastive Divergence under the same conditions, following \citet{liao2023gaussianbernoulli}.  with 10,000 hidden units for 3000 epochs. Training is carried out for 3000 epochs using stochastic gradient with Contrastive Divergence, a batch size of 512, and a learning rate of 0.01. The FID values of generated samples are plotted in \cref{fig:celeba_fid}. The \ref{QGRBM} achieves a \emph{30\% reduction in FID}. An illustrative comparison of generated samples is provided in \cref{fig:celeba_qgrbm,fig:celeba_grbm}, where we observe \emph{noticeably higher sample diversity} in \ref{QGRBM} relative to GRBM.

\subsection{Discussion}

The experiments provide clear answers to the three motivating questions. 
\begin{questions}[leftmargin=*,label=({Q}{{\arabic*}})]
    \item\textbf{Effectiveness of DO-EM.} The exact-likelihood studies show that \doem accelerates \qbm training by nearly two orders of magnitude relative to gradient-based optimization, while achieving comparable log-likelihoods. This demonstrates a tangible computational advantage in settings where exact inference is feasible.
    \item\textbf{Scalability of DO-EM.} The MNIST and Binarized MNIST results confirm that \qlvms scale reliably to models with more than 18,000 units. {\tt QiDBMs} trained under DO-EM remain stable throughout training and attain substantially lower FID scores—up to a 65\% improvement—indicating that the method handles standard ML data sets.
    \item\textbf{Performance of DO-EM.} experiments on both discrete and continuous domains show that \doem trained \qlvms outperform classical probabilistic LVMs of comparable size. {\tt QiDBMs} and {\tt QGRBMs} consistently achieve lower FID scores and exhibit higher sample diversity, reflecting a measurable improvement in generative quality.
\end{questions}
Taken together, these results demonstrate that DO-EM is computationally effective, scalable to standard ML data sets, and capable of delivering superior generative performance relative to their probabilistic counterparts.
\section{Conclusion}\label{sec:discussion}

This work advances density operator modeling by introducing \doem, an EM algorithm for density operator latent variable models that provides a formal guarantee of log-likelihood ascent.We further introduce \qlvms, a broad family of \dolvms uniting probabilistic visible variables with quantum latents, for which DO-EM provides a natural training procedure on standard ML datasets. Our experiments show that {\tt QiDBMs} and {\tt QGRBMs}---our newly introduced \qlvms---scale effectively to image datasets, providing the first image generation benchmark for a \qbm. Across these tasks, they consistently outperform comparable probabilistic models under similar computational budgets and hyperparameter settings, indicating that density-operator formulations can yield tangible performance gains. Finally, the close connection between \doem and quantum-information tools like the Petz recovery map positions it as a promising candidate for future quantum-hardware implementations, which we defer to subsequent work.

\newpage
\appendix
\section{Matrix Analysis}\label{app:analysis}
This appendix summarizes the essential matrix analysis tools used throughout the manuscript. For a more comprehensive treatment, see \citet{bhatia1997}.

We consider only finite-dimensional vector spaces of dimension $d$. The inner product between two elements of such a vector space is denoted by $\braket{u,v}$. Inner products are conjugate linear in the first variable and linear in the second variable. A vector space with an inner product is a finite-dimensional Hilbert space if $\braket{u,u}=0$ if and only if $u=0$. We denote the space of linear operators from a Hilbert space $\hil$ to itself by $\map(\hil)$. if the basis of $\hil$ is fixed, then every operator in $\map(\hil)$ has a unique matrix associated with it. 

An operator $A$ in $\map(\hil)$ is Hermitian if $A=A\h$. The spectrum of such an operator is denoted $\mathrm{Sp}(A)$. A Hermitian operator $A$ is positive semi-definite if $\braket{x,Ax}\geq 0$ for all $x$ in $\hil$. The operator is positive definite if $\braket{x,Ax}>0$ for all non-zero $x$ in $\hil$. According to the spectral theorem, a Hermitian operator $A$ can be expressed as
\begin{align*}
    A = \sum_{i=1}^d \lambda_i \Lambda(v_i)
\end{align*}
where $(\lambda_i,v_i)$ are eigenvalue-eigenvector pairs of $A$. A real valued function $f:\R\to\R$ can be naturally extended to act on operators via the spectral theorem, 
\begin{align*}
    f(A)\mathord{=}\sum_{i=1}^{d}f(\lambda_i)\Lambda(v_i).
\end{align*}
Several inequalities related to convex functions can be extended to such functions. We provide the finite-dimensional restatement of an inequality in \citet{dragomir2016}.
\begin{theorem}[Corollary 2.1 in \citeauthor{dragomir2016}, \citeyear{dragomir2016}]\label{thm:dragomir}
    Let $A$ be a Hermitian operator on a Hilbert space $\hil$ and assume that $\mathrm{Sp}(A)\subseteq[m,M]$ for some scalars $m$, $M$ with $m<M$. If $f$ is a continuous convex function on $[m,M]$ and $P$ be a non-zero positive semi-definite operator in $\map(\hil)$, then $m\leq \frac{\tr(PA)}{\tr(P)}\leq M$ and
    \begin{align*}
        f\left(\frac{\tr(PA)}{\tr(P)}\right)\leq \frac{\tr(Pf(A))}{\tr(P)}.
    \end{align*}
\end{theorem}
\subsection{Matrix exponential and logarithm}\label{app:expmlogm}
In this appendix, we specialize to the matrix exponential and logarithm functions, the most common functions used in this manuscript. The exponential function extended to all Hermitian operators is given by
\begin{align*}
    \exp(A)=\sum_{k=0}^\infty \frac{1}{k}A^k.
\end{align*}
The exponential of a Hermitian operator $A$ is positive definite. The inverse of the exponential function, the logarithm, can be defined on positive definite matrices as 
\begin{align*}
    \log (A) = - \sum_{k=1}^\infty \frac{1}{k}(I-A)^k.
\end{align*}

Let $X$ be in $\map(\hil_A)$ and $Y$ be in $\map(\hil_B)$. We state a few binary matrix operations and their relationship with the exponential function. The direct sum of $X$ and $Y$ is given by $X\oplus Y = \begin{pmatrix}
        X&0\\
        0& Y
    \end{pmatrix}$
    The exponential of $X\oplus Y$ is given by $\exp(X)\oplus \exp(Y)$. The tensor or Kronecker product is given by $X\otimes Y=\begin{pmatrix}
        X_{11}Y&\dots &X_{1d_A}Y\\
    \vdots&\vdots&\vdots\\
    X_{d_A1}Y&\dots&X_{d_Ad_A}Y
\end{pmatrix}$ where $X_{ij}$ are entries of the matrix $X$. The following lemma can be proved using the spectral theorem applied to the Kronecker product.
\begin{lemma}\label{thm:ksum}
    If $X$ is a Hermitian operator in $\map(\hil_A)$, $\exp(A)\otimes \id_B=\exp(A\otimes \id_B)$. Similarly, if $M$ is positive definite operator in $\map(\hil_A)$, $\log(M)\otimes \id_B=\log(M\otimes\id_B)$.
\end{lemma}

\subsection{M Step Gradient}\label{app:mstep}
In this appendix, we provide a proof of \cref{eq:gradient} presented in \citet{kappen2020}.
\begin{corollary}\label{thm:LTS}For a Hamiltonian-based model $\rho(\theta)=\exp(\H(\theta))/Z(\theta)$ with $\H(\theta)=\sum_r \theta_r\H_r$ and  and E step solution $\eta^{(t)}$, the gradient of $\mathcal{Q}(\theta;\theta\iter{\textrm{old}})$ in the M-step of \cref{alg:doem} with respect to the parameter $\theta_r$ is given by
    \begin{align*}
        \frac{\partial}{\partial \theta_r}\mathcal{Q}(\theta;\theta\iter{\textrm{old}})&=\braket{\H_r}_{\eta\iter{t}}-\braket{\H_r}_{\rho(\theta)}.
    \end{align*}
\end{corollary}
\begin{proof}
    The expression for $\mathcal{Q}(\theta;\theta\iter{\textrm{old}})$ is given by
    \begin{align*}
        \mathcal{Q}(\theta;\theta\iter{\textrm{old}})=\tr(\eta\iter{\textrm{old}}\log \rho(\theta))+S(\eta\iter{\textrm{old}})-S(\etav).
    \end{align*}
    We note that the latter two terms in the expression are not dependent on $\theta_r$. The logarithm of the model is given by $\log\rho(\theta)=\sum_r\theta_r\H_r-\log(Z(\theta))\id$. The gradients can then be written as
    \begin{align*}
        \frac{\partial}{\partial \theta_r}\tr(\eta\iter{\textrm{old}}\sum_r\theta_r\H_r)&=\tr(\eta\iter{\textrm{old}}\H_r),\text{ and }\\
        \frac{\partial}{\partial \theta_r}\tr(\eta\iter{\textrm{old}}\log Z(\theta)\id)&=\frac{\partial}{\partial \theta_r}\log \tr\exp(\H(\theta)).
    \end{align*}
    We know by the Lie-Trotter-Suzuki formula that $e^{\H(\theta)}=\lim_{t\to\infty}\left(e^{\H(\theta)/m}\right)^m$. Hence,
    \begin{align*}
        \frac{\partial}{\partial \theta_r}\log \tr\exp(\H(\theta))&=\frac{1}{\tr\exp(\H(\theta))}\tr\left(\int_{0}^{1}\mathrm{d}t e^{\H(\theta) t}\H_re^{\H(\theta)(1-t)}\right),\\
        &=\frac{1}{\tr\exp(\H(\theta))}\tr(e^{\H(\theta)}\H_r),\\
        &=\tr(\rho(\theta)\H_r).
    \end{align*}
    Collecting together the two partial derivatives, we obtain
    \begin{align*}
        \frac{\partial}{\partial \theta_r}\mathcal{Q}(\theta;\theta\iter{\textrm{old}})&=\tr(\eta\iter{t}\H_r)-\tr(\rho(\theta)\H_r)
    \end{align*}
    which is the required result.
\end{proof}
\subsection{Chain Rule for Classical-Quantum States}\label{app:crure}
In this appendix, we prove a chain rule for the Umegaki relative entropy of two classical-quantum states with respect to the same orthonormal basis. 
\begin{theorem}[Chain Rule for Classical-Quantum States]\label{thm:crure}
    Suppose $\eta$ and $\rho$ are two classical-quantum states on $\hil_A\otimes\hil_B$ with respect to the same orthonormal basis of $\hil_A$ given by
    \begin{align*}
        \eta = \sum_{i=1}^{d_A} q(\X=\x_i)\Lambda(\u_i)\otimes \etah(\x_i) \text{ and } \rho = \sum_{i=1}^{d_A} p(\X=\x_i)\Lambda(\u_i)\otimes \rhoh(\x_i).
    \end{align*}
    Then,
    \begin{align*}
        \du(\eta,\rho)=\kl{q(\X)}{p(\X)} + \sum_{\x_i} q(\X=\x_i) \du(\etah(\x_i),\rhoh(\x_i)).
    \end{align*}
\end{theorem}
\begin{proof}
    Rewrite the density operators as 
        \begin{align*}
        \eta = \sum_{i=1}^{d_A} \Lambda(\u_i)\otimes q(\X=\x_i)\etah(\x_i) \text{ and } \rho = \sum_{i=1}^{d_A} \Lambda(\u_i)\otimes p(\X=\x_i)\rhoh(\x_i).
    \end{align*}
    For any positive real $c$ and Hermitian operator $\O$, $\log(c\O)=\log(c)\id+\log(\O)$. Hence,
    \begin{align*}
        \tr(\eta\log \eta)&=\sum_{\x_i} q(\X=\x_i)\log q(\X=\x_i) + q(\X=\x_i)\log \etah(\x_i),\\
        \tr(\eta\log \rho)&=\sum_{\x_i} q(\X=\x_i)\log p(\X=\x_i) + q(\X=\x_i)\log \rhoh(\x_i).
    \end{align*}
    Combining the two expressions and using the definitions of the KL divergence and the Umegaki relative entropy, we obtain
    \begin{align*}
        \du(\eta,\rho)=\kl{q(\X)}{p(\X)} + \sum_{\x_i} q(\X=\x_i) \du(\etah(\x_i),\rhoh(\x_i)),
    \end{align*}
    which is the required expression.
\end{proof}
The chain rule provides a condition for when \cref{thm:mre} is saturated under the partial trace for classical-quantum states.

\subsection{Gibbs Sampling}\label{app:sampling}

In this appendix, we provide a justification for the Gibbs sampling step in \cref{eq:gibbs}. We first state a useful result. The Kronecker sum of two matrices $X$ in $\map(\hil_A)$ and $Y$ in $\map(\hil_B)$ is given by $X\otimes \id_B + \id_A \otimes Y$. Since $[X\otimes \id_B,\id_A \otimes Y]=0$, we have from Lemma \ref{thm:ksum} that
\begin{equation}\label{eq:kronsum}
\exp(X\otimes \id_B + \id_A\otimes Y)=\exp(X)\otimes \exp(Y). 
\end{equation}

Consider a Quantum RBM (\ref{QRBM}) with $m$ visible and $n$ hidden units given by the Hamiltonian
\begin{align*}
    \H(\theta) = -\sum_{i=1}^{m}\mathbf a_i\sz_i-\sum_{i=1}^{n}\mathbf b_i\sz_{m+i} -\sum_{i=1}^{m}\sum_{j=1}^{n}\mathbf W_{ij}\sz_i\sz_{m+j} -\sum_{i=1}^{n}\Gamma_i\sz_{m+i}.
\end{align*}
The model satisfies the \conds for classical data since it only has the transverse terms on the hidden units. For a given visible sample $\x$, the condition density operator for the hidden layer(well-defined due to the model being a \qlvm) is given by
\begin{align*}
    \H_{\hid}(\x,\theta) = -\sum_{j=1}^{n}\mathbf b_j^{\mathrm{eff}}\sz_j-\sum_{j=1}^{n}\Gamma_j\sx_j,
\end{align*}
\begin{equation*}
    \rhoh(\x|\theta)=\frac{1}{\mathcal Z_\hid(\x,\theta)}\exp(\H_{\hid}(\x,\theta))\quad\text{where}\quad\mathcal Z_{\hid}(\x,\theta)=\tr\exp(\H^{\hid}(\x,\theta)).
\end{equation*}
where $\mathbf b_j^{\mathrm{eff}}=\mathbf b_j+\sum_{i=1}^m \mathbf W_{ij}\x$. Using the definition of the Pauli Operators in \cref{eq:pauli}, we can rewrite this Hamiltonian as 
\begin{align*}
    \H_{\hid}(j|\x,\theta)&=-\mathbf b_j^{\mathrm{eff}}\sz-\Gamma_j\sx,\\
   \H_{\hid}(\x,\theta)&=\sum_{j=1}^n \id_{2^{j-1}}\otimes \H_{\hid}(j|\x,\theta)\otimes\id_{2^{n-i}}.
\end{align*}
Using \cref{eq:kronsum} recursively, the density operator $\rhoh(\x_i|\theta)$ is given by
\begin{align*}
    \rhoh(\x|\theta)=\bigotimes_{i=1}^n \frac{1}{\mathcal{Z}_\hid(j|\x,\theta)}\exp(\H_{\hid}(j|\x,\theta))\quad\text{where}\quad \mathcal{Z}_\hid(j|\x,\theta)=\tr\exp(\H_{\hid}(j|\x,\theta)).
\end{align*}
This gives us the desired sampling expression in \cref{eq:gibbs}.
\section{Compute Platform}
\label{app:hardware}

In this section, we provide additional details related to our experiments that we do not provide in the main body of the paper.

\paragraph{Implementation Details} We implement our proposed methods in PyTorch 2.5.1.

\paragraph{Hardware}
\cref{tab:M1,tab:M2} details the hardware we use to conduct our experiments.
Values in (*) indicated reported values obtained from \url{https://www.amd.com/en/products/accelerators/instinct/mi200/mi210.html}. Machine 1 runs Ubuntu 22.04.3 LTS with kernel 6.8.0-40-generic with the hardware in \cref{tab:M1}. 
Machine 2 runs Ubuntu 22.04.1 LTS with kernel 5.15.0-127-generic with the hardware in \cref{tab:M2}. 
Our software stack comprises of 3.12.8, PyTorch 2.5.1, torchvision version 0.20.1.

\begin{table}[H]
  \caption{Machine 1: Specifications of GPU hardware used for computation}
  \label{tab:M1}
  \centering
  \begin{tabular}{lll}
    \toprule
    CPU Model Name & AMD EPYC 9654 96-Core Processor      \\
    CPU(s)     & 192      \\
    Thread(s) per core & 1 \\
    Core(s) per socket     & 96       \\
    Socket(s)     & 2       \\
    NUMA node(s)     & 2       \\
    CPU MHz(Max)     & 3707.8120      \\
    L1d \& L1i cache & 6 MiB \\
    L2 cache & 192 MiB \\
    L3 cache & 768 MiB \\
    RAM & 1.48 TiB (DDR5, 4800 MT/s) \\
    GPU Model name & Instinct MI210\\
    GPU(s) & 4\\
    GPU Architecture & AMD Aldebaran\\
    Dedicated Memory Size(per GPU) &    64 GB\\
    ROCm Version & 6.0.2\\
    Peak FP32 Performance* &     22.6 TFLOPs\\
    Peak FP64 Performance*  & 22.6 TFLOPs\\
    Memory Clock* &     1.6 GHz\\
    Peak Memory Bandwidth* &     1.6 TB/s\\
    \bottomrule
  \end{tabular}
\end{table}

\begin{table}[H]
  \caption{Machine 2: Specifications of GPU hardware used for computation}
  \label{tab:M2}
  \centering
  \begin{tabular}{lll}
    \toprule
    CPU Model Name & Intel(R) Xeon(R) Silver 4216 CPU     \\
    CPU(s)     & 64      \\
    Thread(s) per core & 2 \\
    Core(s) per socket     & 16       \\
    Socket(s)     & 2       \\
    NUMA node(s)     & 2       \\
    CPU MHz(Max)     & 3200.0000      \\
    L1d \& L1i cache & 1 MiB \\
    L2 cache & 32 MiB \\
    L3 cache & 44 MiB \\
    RAM & 62 GiB \\
    GPU Model name & NVIDIA GeForce RTX 2080 Ti\\
    GPU(s) & 8\\
    Dedicated Memory Size(per GPU) &    11 GB\\
    CUDA Version & 12.9\\
    \bottomrule
  \end{tabular}
\end{table}
\vskip 0.2in
\bibliography{refs}

\end{document}